\def\eqref#1{equation~\ref{#1}}
\def\1{\bm{1}}
\def\rvs{{\mathbf{s}}}
\def\rvy{{\mathbf{y}}}
\def\rmX{{\mathbf{X}}}
\DeclareMathAlphabet{\mathsfit}{\encodingdefault}{\sfdefault}{m}{sl}
\SetMathAlphabet{\mathsfit}{bold}{\encodingdefault}{\sfdefault}{bx}{n}
\newcommand{\cV}{\mathcal{V}}
\newcommand{\cI}{\mathcal{I}}
\theoremstyle{plain}
\newtheorem{theorem}{Theorem}[section]
\newtheorem{proposition}[theorem]{Proposition}
\theoremstyle{definition}
\theoremstyle{remark}
\title{Beyond Distribution Shift: Spurious Features Through the Lens of Training Dynamics}
\author{\name Nihal Murali \email nihal.murali@pitt.edu \\
      \addr Intelligent Systems Program\\
      University of Pittsburgh
      \AND
      \name Aahlad Puli \email aahlad@nyu.edu \\
      \addr Department of Computer Science\\
      New York University
      \AND
      \name Ke Yu \email yu.ke@pitt.edu\\
      \addr Intelligent Systems Program \\
      University of Pittsburgh
      \AND
      \name Rajesh Ranganath \email rajeshr@cims.nyu.edu\\
      \addr Department of Computer Science \\
      New York University
      \AND
      \name Kayhan Batmanghelich \email batman@bu.edu\\
      \addr Department of Electrical and Computer Engineering \\
      Boston University}
\begin{document}

\maketitle

\begin{abstract}
    Deep Neural Networks (DNNs) are prone to learning spurious features that correlate with the label during training but are irrelevant to the learning problem. This hurts model generalization and poses problems when deploying them in safety-critical applications. This paper aims to better understand the effects of spurious features through the lens of the learning dynamics of the internal neurons during the training process. We make the following observations: (1) While previous works highlight the harmful effects of spurious features on the generalization ability of DNNs, we emphasize that not all spurious features are harmful. Spurious features can be ``\emph{benign}'' or ``\emph{harmful}'' depending on whether they are ``harder'' or ``easier'' to learn than the core features for a given model. This definition is model and dataset dependent. (2) We build upon this premise and use \emph{instance difficulty} methods (like Prediction Depth~\citep{baldock2021_predictionDepth}) to quantify ``easiness'' for a given model and to identify this behavior during the training phase. (3) We empirically show that the harmful spurious features can be detected by observing the learning dynamics of the DNN's \emph{early layers}. In other words, easy features learned by the initial layers of a DNN early during the training can (potentially) hurt model generalization. We verify our claims on medical and vision datasets, both simulated and real, and justify the empirical success of our hypothesis by showing the theoretical connections between Prediction Depth and information-theoretic concepts like $\mathcal{V}$-usable information~\citep{swabha_pvi}. Lastly, our experiments show that monitoring only accuracy during training (as is common in machine learning pipelines) is insufficient to detect spurious features. We, therefore, highlight the need for monitoring early training dynamics using suitable instance difficulty metrics.
\end{abstract}

\section{Introduction}
  DNNs tend to rely on spurious features even in the presence of \textit{core} features that generalize well, which poses serious problems when deploying them in safety-critical applications such as finance, healthcare, and autonomous driving~\citep{geirhos2020shortcut,luke_tube_in_lung,degrave_covid_shortcut}. A feature is termed spurious if it is correlated with the label during training but is irrelevant to the learning problem~\citep{saab2022reducing,izmailov2022feature}. Previous works use a distribution shift approach to explain this phenomenon ~\citep{kirichenko2022_last_layer_retraining,wiles2021fine,andrew_beam_structural_characterization,adnan2022monitoring_shortcut_learning}. However, we get additional insights by emphasizing on the learnability and difficulty of these features. We find that not all spurious features are harmful. Spurious features can be ``\textit{benign}'' or ``\textit{harmful}'' depending upon their difficulty with respect to signals that generalize well. We show how monitoring example difficulty metrics like Prediction Depth (PD)~\citep{baldock2021_predictionDepth} can reveal the harmful spurious features quite early during training. Early detection of such features is important as it can help develop intervention schemes to fix the problem early. To the best of our knowledge, we are the first to use the training dynamics of the model to detect spurious feature learning.

    The premises that support our hypothesis are as follows: \textbf{\textit{(P1)}} Spurious features hurt generalization only when they are \textit{``easier''} to learn than the core features (see Fig-\ref{fig:spurious_vs_shortcut}). \textbf{\textit{(P2)}} Initial layers of a DNN tend to learn easy features, whereas the later layers tend to learn the harder ones \citep{zeiler2014visualizing,baldock2021_predictionDepth}. \textbf{\textit{(P3)}} Easy features are learned much earlier than the harder ones during training~\citep{mangalam2019deep,rahaman2019spectral}. Premises~\textbf{\textit{(P1-3)}} lead us to conjecture that: \textit{``Monitoring the easy features learned by the initial layers of a DNN early during the training can help identify the harmful spurious features.''}

We make the following observations. First, we show that spurious features can be benign (harmful) depending upon whether they are more challenging  (easier) to learn than the core features. Second, we empirically show that our hypothesis works well on medical and vision datasets~(sections-\ref{subsec:main_expts_toy_medical},\ref{appdx:vision_expts}), both simulated and real, regardless of the DNN architecture used. We justify this empirical success by theoretically connecting prediction depth with information-theoretic concepts like $\mathcal{V}$-usable information~\citep{swabha_pvi} (sections-\ref{sec:background},\ref{subsec:proof_prop1}). Lastly, our experiments highlight that monitoring only accuracy during training, as is common in machine learning pipelines, is insufficient to detect spurious features. In addition, we need to monitor the learning dynamics of the model using suitable instance difficulty metrics to detect harmful spurious features (section-\ref{expt:detect_shortcut_early}). This will not only save time and computational costs, but also help develop reliable models that do not rely on spurious features.
 
 \begin{figure}[t!]
    \centering
    \centerline{\includegraphics[width=0.85\textwidth, trim = 2cm 5.5cm 2cm 3cm]{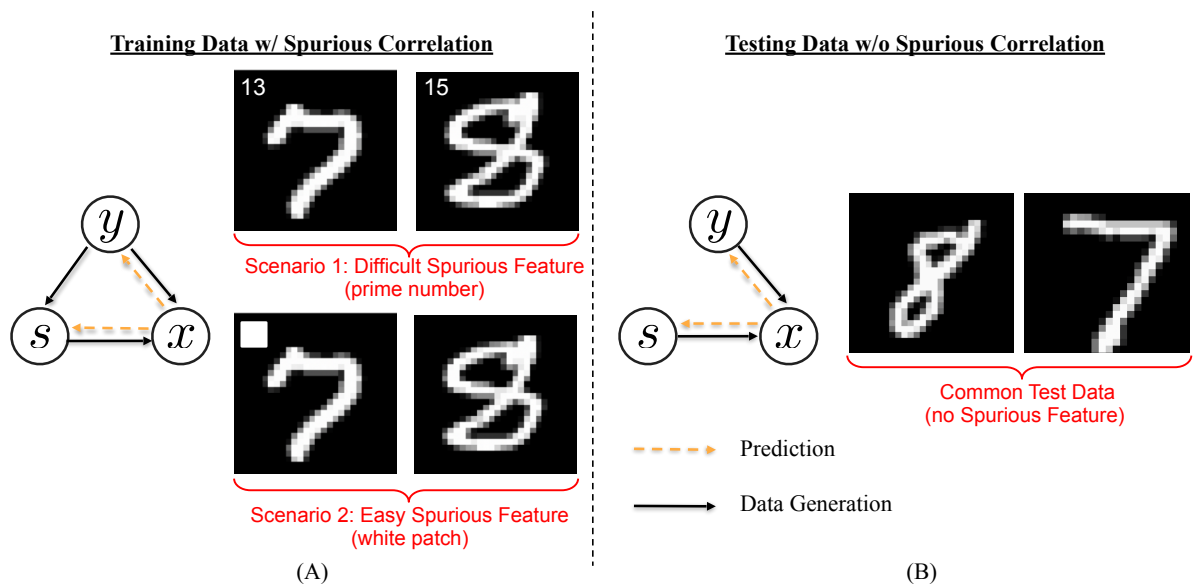}}
    \caption{An illustration of how the distribution shift viewpoint cannot distinguish between the benign and the harmful spurious features. This viewpoint suggests that training and testing are different graphical models between input ($x$), output ($y$), and the spurious feature ($s$). If $x$ can predict $s$, and $y$ is not correlated to s on the test data, then $s$ is viewed as a spurious feature. (A) The figure shows two scenarios for even-odd classification. Scenario 1 shows a dataset where all even numbers have a spurious composite number (located at the top-left), and odd numbers have a prime number. Scenario 2 shows a dataset where all odd numbers have a spurious white patch. The spurious white patch is harmful in nature as it is an easy feature that the model can learn, causing poor performance on the test data. Whereas classifying prime numbers, as shown in scenario 1, is challenging. So the model ignores such benign spurious features that do not affect test data performance. This shows not all spurious features hurt generalization.}
    \label{fig:spurious_vs_shortcut}
\end{figure}

\section{Related Work}
    \paragraph{Not all spurious features hurt generalization:}
\citet{geirhos2020shortcut} define spurious features as features that exist in standard benchmarks but fail to hold in more challenging test conditions. \citet{wiles2021fine} view spurious feature learning as a distribution shift problem where two or more attributes are correlated during training but are independent in the test data. \citet{andrew_beam_structural_characterization} use causal diagrams to explain spurious correlations as features that are correlated with the label during training but not during deployment. 
All these papers characterize spurious correlations purely as a consequence of distribution shift; methods exist to build models robust to such shifts~\citep{arjovsky2019invariant,krueger2021out,puli2021_nurd}. The distribution shift viewpoint cannot distinguish between benign and harmful spurious features. In contrast, we stress the learnability and difficulty of spurious features, which helps separate the benign spurious features from the harmful ones (see Fig-\ref{fig:spurious_vs_shortcut}). Previous works like \citet{shah2020pitfalls,scimeca2021shortcut} hint at this by saying that DNNs are biased towards simple solutions, and \citet{too_good_to_be_true_prior} use the ``too-good-to-be-true'' prior to emphasize that simple solutions are unlikely to be valid across contexts. \citet{veitch2021counterfactual} distinguish various model features using tools from causality and stress test the models for counterfactual invariance. 
Other works in natural language inference, visual question answering, and action recognition, also assume that simple solutions could be harmful in nature~\citep{simple_solutions_are_shortcuts5,simple_solutions_are_shortcuts3,simple_solutions_are_shortcuts2,simple_solutions_are_shortcuts4,simple_solutions_are_shortcuts1}.
We take this line of thought further by hypothesizing that simple solutions or, more explicitly, easy features, which affect the early training dynamics of the model can be harmful in nature. We suggest using suitable example difficulty metrics to measure this effect.

\paragraph{Estimating Example Difficulty:} There are different metrics in the literature for measuring instance-specific difficulty~\citep{agarwal2022_variance_of_grad,hooker2019compressed,lalor2018understanding}. \citet{jiang2020_char_structural} train many models on data subsets of varying sizes to estimate a consistency score that captures the probability of predicting the true label for a particular example. \citet{toneva2018empirical_forgetting} define example difficulty as the minimum number of iterations needed for a particular example to be predicted correctly in all subsequent iterations. \citet{agarwal2022_variance_of_grad} propose a VoG (variance-of-gradients) score which captures example difficulty by averaging the pre-softmax activation gradients across training checkpoints and image pixels. \citet{feldman2020neural} use a statistical viewpoint of measuring example difficulty and develop influence functions to estimate the actual leave-one-out influences for various examples. \citet{swabha_pvi} use an information-theoretic approach to propose a metric called pointwise $\mathcal{V}$-usable information (PVI) to compute example difficulty. \citet{baldock2021_predictionDepth} define prediction depth (PD) as the minimum number of layers required by the DNN to classify a given input and use this to compute instance difficulty. In our experiments, we use the PD metric to provide a proof of concept for our hypothesis.

\paragraph{Monitoring Training Dynamics:} Other works that monitor training dynamics have a significantly different goal than ours. While we monitor training dynamics to detect the harmful spurious features, \citet{rabanser2022_selective_classification} uses neural network training dynamics for selective classification. They use the disagreement between the ground truth label and the intermediate model predictions to reject examples and obtain a favorable accuracy/coverage trade-off. \citet{feng2021phases_of_learning_dynamics} use a statistical physics-based approach to study the training dynamics of stochastic gradient descent (SGD). While they study the effect of mislabeled data on SGD training dynamics, we study the effect of spurious features on the early-time learning dynamics of DNNs. \citet{hu2020surprising_simplicity} use the early training dynamics of neural networks to show that a simple linear model can often mimic the learning of a two-layer fully connected neural network. \citet{adnan2022monitoring_shortcut_learning} have a similar goal as ours and use mutual information to monitor spurious feature learning. However, computing mutual information is intractable for high-dimensional data, and hence their work is only limited to infinite-width neural networks that offer tractable bounds for this computation. Our work, on the contrary, is more general and holds for different neural network architectures and datasets.

\section{Background and Methodology}
    \label{sec:background}
     In this section, we formalize the notion of spurious features, task difficulty, and harmful vs benign features. Let $P_{tr}$ and $P_{te}$ be the training and test distributions defined over the random variables $\rmX$ (input), $\rvy$ (label), and $\rvs$ (\emph{latent} spurious feature).

\paragraph{Spurious Feature ($\rvs$):} A latent feature $\rvs$ is called spurious if it is correlated with label $\rvy$ in the training data but not in the test data. Specifically, the joint probability distributions $P_{tr}$ and $P_{te}$ can be factorized as follows.
\begin{equation}
    P_{tr}(\rmX,\rvy,\rvs) = P_{tr}(\rmX|\rvs,\rvy) P_{tr}(\rvs|\rvy)P_{tr}(\rvy) \nonumber
\end{equation}
\begin{equation}
    P_{te}(\rmX,\rvy,\rvs) = P_{tr}(\rmX|\rvs,\rvy) P_{te}(\rvs)P_{tr}(\rvy). \nonumber
\end{equation}
The variable $\rvs$ appears to be related to $\rvy$ but is not. This is shown in Fig-\ref{fig:spurious_vs_shortcut}. We also introduce notation for task difficulty. The difficulty of a task depends on the model and data distribution ($\rmX, \rvy$).

\paragraph{Task Difficulty ($\Psi$):}
Let $\Psi^{P}_{\mathcal{M}}(\rmX, \rvy)$ indicate the difficulty of predicting label $\rvy$ from input $\rmX$ for a model $\mathcal{M}$, such that $\rmX,\rvy \sim P$. 

We give examples of two metrics that can be used for measuring $\Psi^{P}_{\mathcal{M}}$.

\paragraph{Example-1 for $\Psi^{P}_{\mathcal{M}}$ (Prediction Depth):}  \citet{baldock2021_predictionDepth} proposed the notion of Prediction Depth (PD) to estimate example difficulty. The PD of input is defined as the minimum number of layers the model requires to classify the input. The lower the PD of input, the easier it is to classify. It is computed by building $k$-NN classifiers on the embedding layers of the model, and the earliest layer after which all subsequent $k$-NN predictions remain the same is the PD of the input. Figure-\ref{fig:pd_illustration} illustrates how we compute prediction depth. PD can be mathematically defined as follows:

\begin{equation}
    \text{PD}(x) = \min_k\{k | f_{knn}^k(x) = f_{knn}^i(x); i>k\}
\end{equation}

$f_{knn}$ is the $k$-NN classifier (see Appendix-\ref{appdx:undefined_PD} for details), $\phi^i$ is the feature embedding for the given input at layer-$i$, and $N$ is the index of the final layer of the model.  We also use the notion of undefined PD to work with models that are not fully trained. We treat $k$-NN predictions close to 0.5 (for a binary classification setting) as invalid. Figure-\ref{fig:pd_plot_explain} illustrates how to read the PD plots used in our experiment.

 \begin{figure*}[t!]
    \centering
    \centerline{\includegraphics[width=1.0\textwidth, trim = 1cm 3.8cm 3cm 3.5cm]{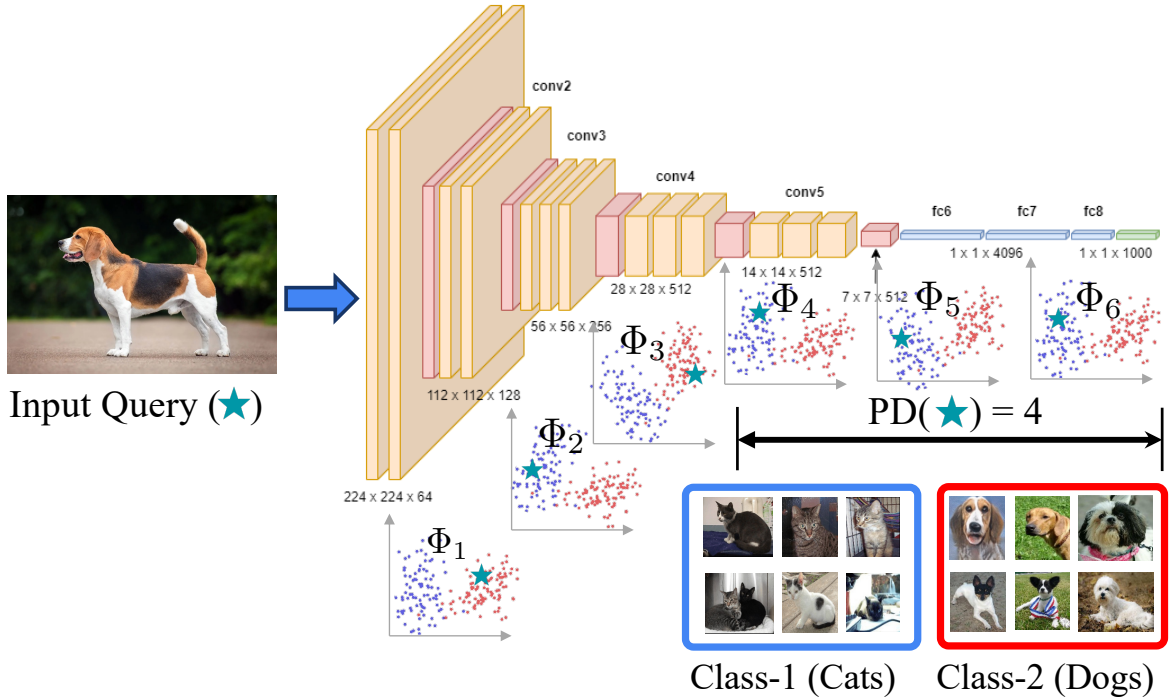}}
    \caption{The prediction depth of an image for a given model tells us the minimum number of layers needed by the model to classify the image. A subset of the training data is used to collect feature embeddings at each layer. The KNN classifier uses these feature embeddings at each layer to classify the input query. Prediction depth for the input query is the number of layers after which the KNN predictions converge.}
    \label{fig:pd_illustration}
\end{figure*}

\paragraph{Example-2 for $\Psi^{P}_{\mathcal{M}}$ ($\mathcal{V}$-Usable Information):} The Mutual Information between input and output, $I(X; Y)$, is invariant with respect to lossless encryption of the input, i.e., $I(\tau(X); Y) = I(X; Y)$. Such a definition assumes unbounded computation and is counter-intuitive to define task difficulty as heavy encryption of $X$ does not change the task difficulty. The notion of \textit{``Usable Information''} introduced by~\citet{xu2020_usable_info} assumes bounded computation based on the model family $\mathcal{V}$ under consideration. Usable information is measured under a framework called \textit{predictive $\mathcal{V}$-information}~\citep{xu2020_usable_info}. \citet{swabha_pvi} introduce \textit{pointwise $\mathcal{V}$-information} (PVI) for measuring example difficulty. 
\begin{align} \label{pvi_eq}
    \text{PVI}(x \rightarrow y) = -\log_2g[\phi](y) + \log_2g'[x](y), \\
    \text{s.t.}  \quad g,g' \in \mathcal{V} \nonumber
\end{align}
The function $g$ is trained on $(\phi,y)$ input-label pairs, where $\phi$ is a null input that provides no information about the label $y$. $g'$ is trained on $(x,y)$ pairs from the training data. Lower PVI instances are harder for $\mathcal{V}$ and vice-versa. Since the first term in Eq-\ref{pvi_eq} corresponding to $g$ is independent of the input $x$, we only consider the second term having $g'$ in our experiments.

\paragraph{Harmful Spurious Feature:} The spurious feature $\rvs$ is harmful for model $\mathcal{M}$ iff $\Psi^{P_{tr}}_{\mathcal{M}}(\rmX,\rvy) < \Psi^{P_{tr}}_{\mathcal{M}}(\rmX,\rvy|\text{do}(s))$. We use the causal operator ``do$(s)$'' to denote a causal intervention on the spurious feature $s$ to break the correlation between spurious features and the labels. This is equivalent to removing the arrow from $\rvy$ to $s$ as shown in Figure-\ref{fig:spurious_vs_shortcut}B. In other words, if removing the spurious correlation between $\rvs$ and $\rvy$ from the training data makes it harder for model $\mathcal{M}$ to predict the true label $\rvy$, then the spurious feature is ``harmful''. 

The above formalization clarifies that a spurious feature may be harmful to one model but benign for another, as its definition is closely tied to the dataset, model, and task. Figure-\ref{fig:deepsets_eg} illustrates this point. 

 \begin{figure*}[t!]
    \centering
    \centerline{\includegraphics[width=1.0\textwidth, trim = 0.5cm 9cm 0.5cm 3.5cm]{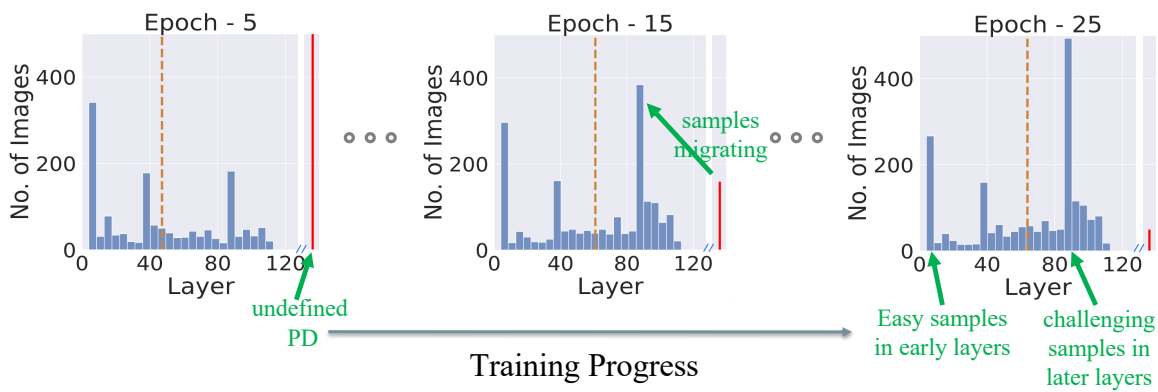}}
    \caption{Examples of PD plots (for DenseNet-121) at different stages of the training process. The red bar indicates samples with undefined PD, and the dotted vertical line indicates the mean PD of the plot. Notice that the undefined samples (shown in red) slowly accumulate in layer 88 as training progresses. This is because the model needs more time to learn the challenging samples which accumulate at higher prediction depth, i.e., later layers.}
    \label{fig:pd_plot_explain}
\end{figure*}

 \begin{figure*}[t!]
    \centering
    \centerline{\includegraphics[width=1.0\textwidth, trim = 0.5cm 3.3cm 0.5cm 4cm]{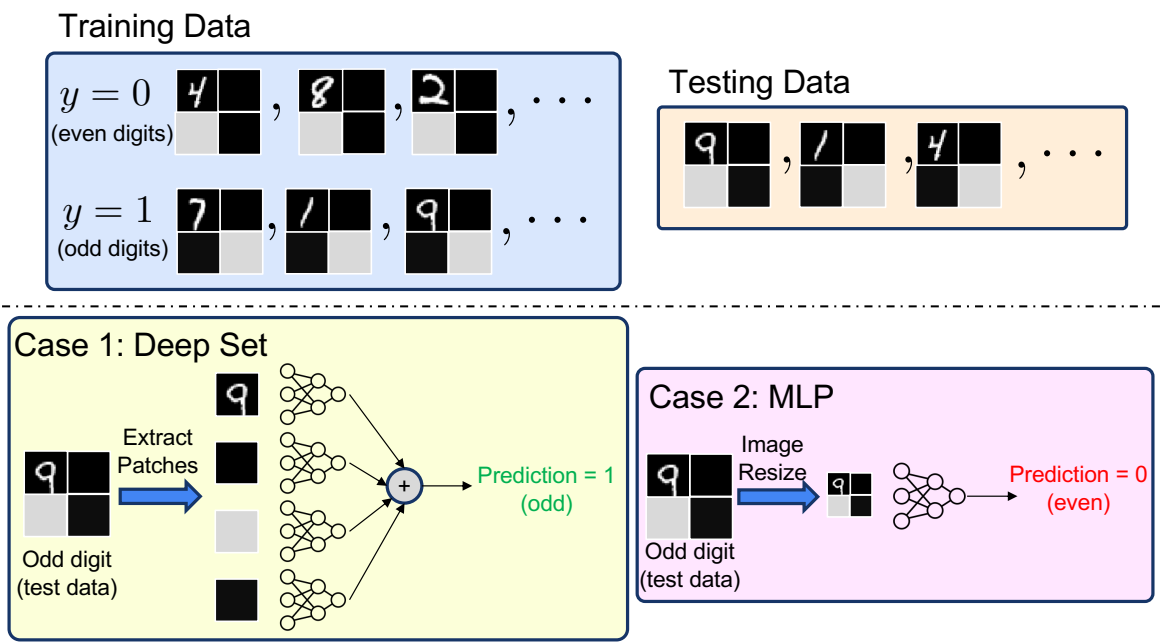}}
    \caption{One model's food is another model's poison! A spurious feature may be benign for one model but harmful for another. Consider the task of classifying digits into ``even'' or ``odd'' classes. All even (odd) digits in training data have a spurious white patch in the bottom left (right) corner. This correlation doesn't exist in the test data as the spurious white patch is randomly placed. Case-1 shows the Deep Set~\citep{deep_sets} model, which patchifies the input image, processes each patch separately, and combines the features. The model's output is invariant to the location of the white patch. However, the MLP model shown in case-2 is location-sensitive; hence, it learns to associate the location of the spurious white patch with the label. Consequently, the MLP model is susceptible to learning such spurious correlations and performs poorly on the test data. So the spurious white patch is \textit{benign} for Deep Sets but \textit{harmful} for the MLP model.}
    \label{fig:deepsets_eg}
\end{figure*}

 In what follows, we relate the notions of PD and $\mathcal{V}$-usable information. We use $\mathcal{V}_{cnn}$ (of finite depth and width) in our proof as our experiments mainly use CNN architectures. 
\paragraph{Proposition 1:} (Informal) Consider two datasets: $D_s \sim P_{tr}(\rmX,\rvy)$ with spurious features and $D_i \sim P_{te}(\rmX,\rvy)$ without them. For some mild assumptions on PD (see Appendix-\ref{subsec:proof_prop1}), if the mean PD of $D_s$ is less than the mean PD of $D_i$, then the $\cV_{cnn}$-usable-information for $D_s$ is larger than the  $\cV_{cnn}$-usable-information for $D_i$: $\cI_{\cV_{cnn}}^{D_s}(X \rightarrow Y) > \cI_{\cV_{cnn}}^{D_i}(X \rightarrow Y)$. \\

See proof in Appendix-\ref{subsec:proof_prop1}. The proposition intuitively implies that a sufficient gap between the mean PD of spurious and core features can result in spurious features having more $\cV_{cnn}$-usable information than core features. In such a scenario, the model will be more inclined to learn spurious features instead of core ones. This proposition justifies using the PD metric to detect spurious features during training, as demonstrated in the following experiments.

\section{Experiments}
    \label{sec:experiments}
    \newcommand\Tstrut{\rule{0pt}{2.6ex}}         
\newcommand\Bstrut{\rule[-1.0ex]{0pt}{0pt}}   

We set up four experiments to evaluate our hypothesis. 
\textit{First}, we consider two kinds of datasets, one where the spurious feature is easier than the core feature for a ResNet-18 model and another where the spurious feature is harder than the core for the same model. We train a classifier on each dataset and observe that the ResNet-18 model can learn the easy spurious feature but not the harder one. This experiment demonstrates that not all spurious features hurt generalization, but only those spurious features that are easier for the model than the core features are harmful.
\textit{Second}, we use medical and vision datasets to demonstrate how monitoring the learning dynamics of the initial layers can reveal harmful spurious features. We show that an early peak in the PD plot may correspond to harmful spurious features and must raise suspicion. Visualization techniques like grad-CAM can provide intuition about what feature is being used at that layer.
\textit{Third}, we show how harmful spurious features can often be detected relatively early during training. This is because initial layers which learn such spurious features converge very early during the training. We observe this by monitoring PD plots across training epochs. In all of our experiments, the PD plot reveals the spurious feature within two epochs of training.
\textit{Fourth}, we show that datasets with easy spurious features have more ``usable information''~\citep{swabha_pvi} compared to their counterparts without such features. Due to higher usable information, the model requires fewer layers to classify the images with spurious features. We use this experiment to empirically justify Proposition-1 outlined in Appendix-\ref{subsec:proof_prop1}. We also conduct additional experiments (see Appendix-\ref{sec:justifying_pd}) to justify the use of the PD metric in our work.

\setcounter{figure}{5} 
\setcounter{table}{1} 
\begin{figure*}[!ht]
    \centering
    \includegraphics[width=1.0\textwidth, trim = 1cm 10.5cm 1cm 2.5cm]{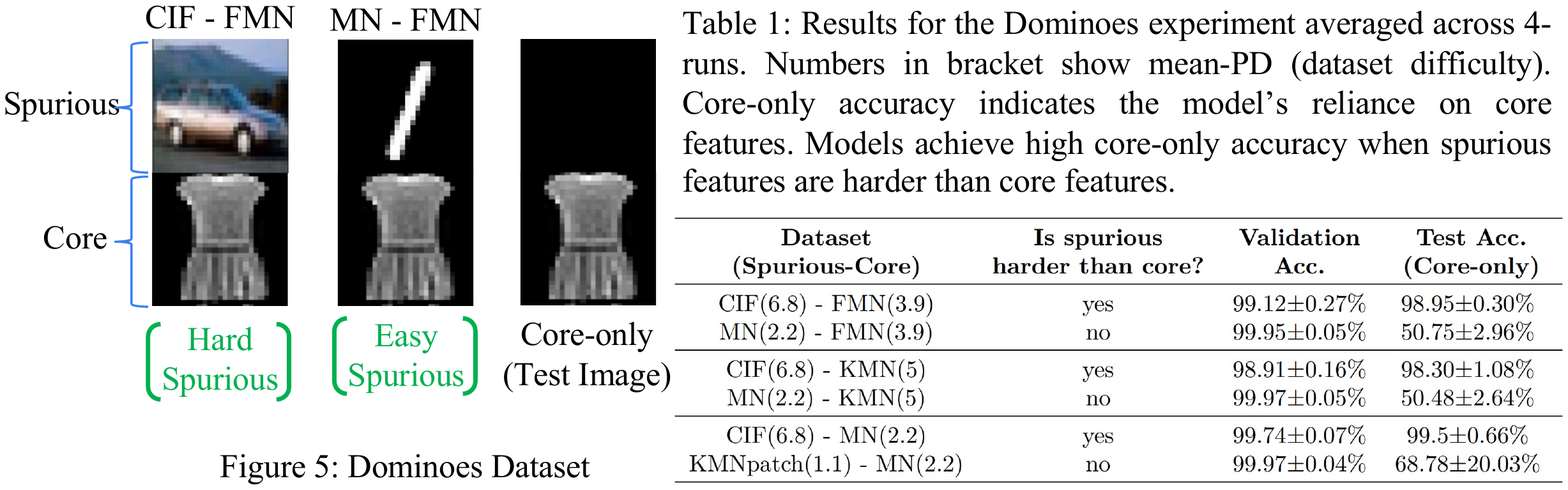}    \label{fig:domino_fig}
\end{figure*}

\subsection{Not all spurious features hurt generalization!}
\label{subsec:not_all_spurious}
We use the Dominoes binary classification dataset (formed by concatenating two datasets vertically; see Fig-4) similar to the setup of \citet{kirichenko2022_last_layer_retraining}. The bottom (top) image acts as the core (spurious) feature. Images are of size $64 \times 32$. We construct three pairs of domino datasets such that each pair has both a hard and an easy spurious feature with respect to the common core feature (see Table-1). We use classes \{0,1\} for MNIST and SVHN, \{coat,dress\} for FMNIST,  and \{airplane, automobile\} for CIFAR10. We also include two classes from Kuzushiji-MNIST (or KMNIST) and construct a modification of this dataset called KMNpatch, which has a spurious patch feature (5x5 white patch on the top-left corner) for one of the two classes of KMNIST. The spurious features are perfectly correlated with the target. The order of dataset difficulty based on the mean-PD is as follows: KMNpatch(1.1) $<$ MNIST(2.2) $<$ FMNIST(3.9) $<$ KMNIST(5) $<$ SVHN(5.9) $<$ CIFAR10(6.8). We use a ResNet18 model and measure the validation and core-only accuracies. The validation accuracy is measured on a held-out dataset sampled from the same distribution. For the core-only (test) accuracy, we blank out the spurious feature (top-half image) by replacing it with zeroes (same as~\citet{kirichenko2022_last_layer_retraining}). The higher the core-only accuracy, the lesser the model's reliance on the spurious feature.

Table 1 shows that all datasets achieve a high validation accuracy as expected. The core-only accuracy stays high ($>$98\%) for datasets where the spurious feature is harder to learn than the core feature, indicating the model's high reliance on core features. When the spurious feature is easier than the core, the model learns to leverage them, and hence the core-only accuracy drops to nearly random chance ($\sim$50\%). Interestingly, the KMNpatch-MN results have a much higher core-only accuracy ($\sim$69\%) and a more significant standard deviation. We explain this phenomenon in Appendix-\ref{appdx:pd_perspective_feature_learning}. This experiment demonstrates that not all spurious features hurt generalization, and only those that are easier than core features are harmful.

\subsection{Monitoring Initial Layers Can Reveal the Harmful Spurious Features}
\label{subsec:main_expts_toy_medical}

\paragraph{Synthetic Spurious Feature in Toy Dataset:} To provide a proof of concept, we demonstrate our method on the Kuzushiji-MNIST (KMNIST)~\citep{kmnist} dataset comprising Japanese Hiragana characters. The dataset has ten classes and images of size $28\times28$, similar to MNIST. We insert a white patch (spurious feature) at a particular location for each of the ten classes. The location of the patch is class-specific. We train two VGG16 models, one on the KMNIST with a spurious patch ($\mathcal{M}_{sh}$) and another on the original KMNIST without the patch ($\mathcal{M}_{orig}$). 

\begin{figure*}[!t]
    \centering
    \centerline{\includegraphics[width=1\textwidth, trim = 0cm 8.5cm 0cm 8.5cm]{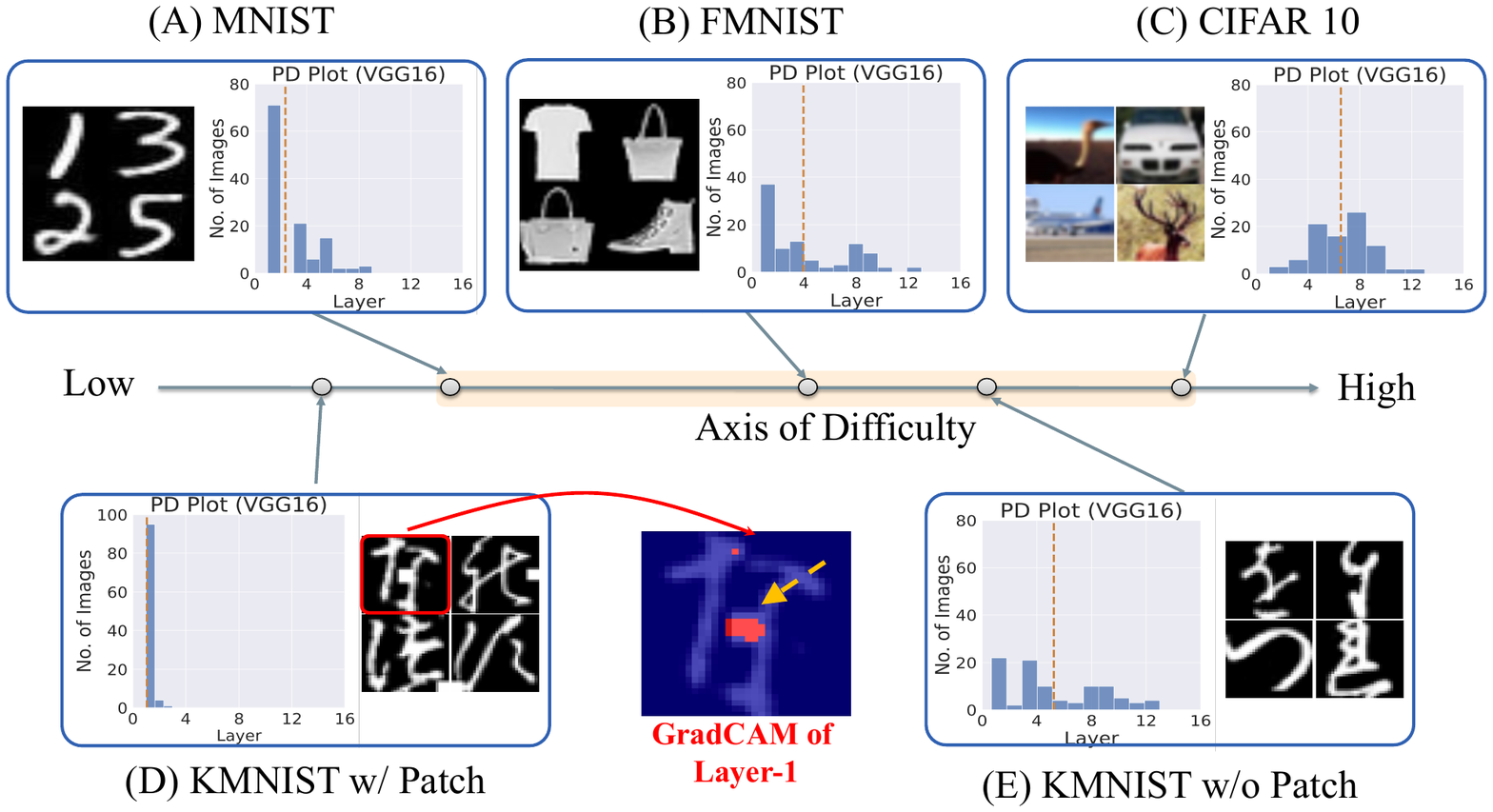}}
    \caption{Top row shows three reference datasets and their corresponding prediction depth (PD) plots. The datasets are ordered based on their difficulty (measured using mean PD shown by dotted vertical lines): KMN w/ patch(1.1) $<$ MNIST(2.2) $<$ FMNIST(3.9) $<$ KMN w/o patch(5) $<$ SVHN(5.9) $<$ CIFAR10(6.8). The bottom row shows the effect of the spurious patch on the KMNIST dataset. The yellow region on the axis indicates the expected difficulty of classifying KMNIST. While the original KMNIST lies in the yellow region, the spurious patch significantly reduces the task difficulty. The Grad-CAM shows that the model focuses on the spurious patch.}
    \label{fig:kmnist_expts}
\end{figure*}

Fig-\ref{fig:kmnist_expts} shows the prediction depth (PD) plots for this experiment. The vertical dashed lines show the mean PD for each plot. Fig-\ref{fig:kmnist_expts}D shows that KMNIST with patch has a much lower mean PD than all the other datasets, and the PD plot is also highly skewed towards the initial layers. This suspicious behavior is because the white patch is a very easy feature, and hence the model only needs a single layer to detect it. The Grad-CAM maps for layer-1 confirm this by showing that $\mathcal{M}_{sh}$ focuses mainly on the patch (see Fig-\ref{fig:kmnist_expts}D), and hence the test accuracy on the original KMNIST images is very low ($\sim$8\%). The PD plot for $\mathcal{M}_{orig}$ (see Fig-\ref{fig:kmnist_expts}E) is not as skewed toward lower depth as the plot for $\mathcal{M}_{sh}$. This is expected as $\mathcal{M}_{orig}$ is not looking at the spurious patch and therefore utilizes more layers to make the prediction. The mean PD for $\mathcal{M}_{orig}$ suggests that the original KMNIST is harder than Fashion-MNIST but easier than CIFAR10. $\mathcal{M}_{orig}$ also achieves a higher test accuracy ($\sim$98\%). 

This experiment demonstrates how models that learn spurious features ($\mathcal{M}_{sh}$) exhibit PD plots that are suspiciously skewed towards the initial layers. The skewed PD plot should raise concerns, and visualization techniques like Grad-CAM can aid our intuition about what spurious feature the model may be utilizing at any given layer. 

\paragraph{Semi-Synthetic Spurious Feature in Medical Datasets:} \label{sec:semi_synth_expts} We follow the procedure by \citet{degrave_covid_shortcut} to create the ChestX-ray14/GitHub-COVID dataset. This dataset comprises Covid19 positive images from Github Covid repositories and negative images from ChestX-ray14 dataset~\citep{wang_chestxray14_dataset}. In addition, we also create the Chex-MIMIC dataset following the procedure by \citet{puli2021_nurd}. This dataset comprises 90\% images of Pneumonia from Chexpert~\citep{chexpert_dataset} and 90\% healthy images from MIMIC-CXR~\citep{mimic_cxr_dataset}. We train two DenseNet121 models,  $\mathcal{M}_{covid}$ on the ChestX-ray14/GitHub-COVID dataset, and $\mathcal{M}_{chex}$ on the Chex-MIMIC dataset. We use DenseNet121, a common and standard architecture for medical image analysis. Images are resized to $512\times512$. 

\begin{figure}[htbp!]
    \centering
    \centerline{\includegraphics[width=0.85\textwidth, trim = 2cm 9cm 2cm 4.3cm]{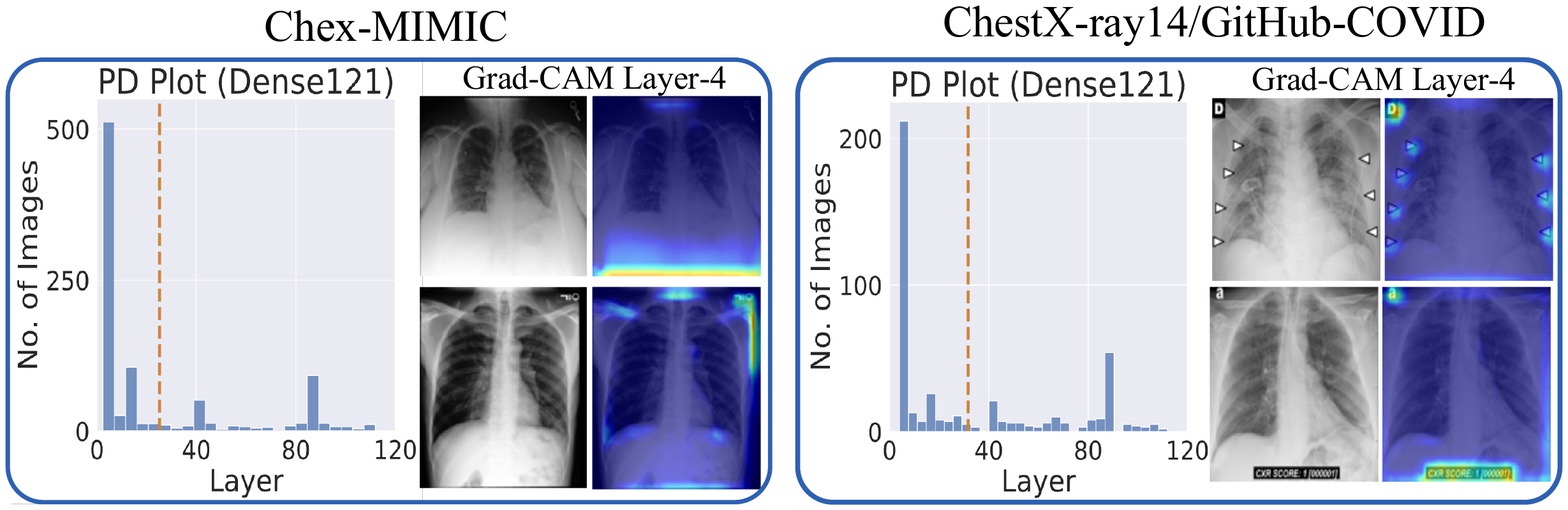}}
    \caption{PD plots for two DenseNet-121 models trained on Chex-MIMIC and ChestX-ray14/GitHub-COVID datasets are shown in the figure, along with their corresponding Grad-CAM visualizations. Both PD plots exhibit a very high peak in the initial layers (1 to 4), indicating that the models use very easy features to make the predictions.
    }
    \label{fig:semi_synthetic_medical_shortcut}
\end{figure}

Fig-\ref{fig:semi_synthetic_medical_shortcut} shows the PD plots for $\mathcal{M}_{chex}$ and $\mathcal{M}_{covid}$. Both the plots are highly skewed towards initial layers, similar to the KMNIST with spurious patch in Fig-\ref{fig:kmnist_expts}D. This again indicates that the models are using very easy features to make the predictions, which is concerning as the two tasks (pneumonia and covid19 detection) are hard tasks even for humans. Examining the Grad-CAM maps at layer-4 reveals that these models focus on spurious features outside the lung region, which are irrelevant because both diseases are known to affect mainly the lungs. The reason for this suspicious behavior is that, in both these datasets, the healthy and diseased samples have been acquired from two different sources.
This creates a spurious feature because source-specific attributes or tokens are predictive of the disease and can be easily learned, as pointed out by \citet{degrave_covid_shortcut}.

\paragraph{Real Spurious Feature in Medical Dataset:}\label{sec:real_nih_expts} For this experiment, we use the NIH dataset~\citep{nih_dataset} which has the popular chest drain spurious feature (for Pneumothorax detection)~\citep{luke_tube_in_lung}. Chest drains are used to treat positive Pneumothorax cases. Therefore, the presence of a chest drain in the lung is positively correlated with the presence of Pneumothorax and can be used by the deep learning model~\citep{luke_tube_in_lung}. Appendix-\ref{subsec:nih_chest_drains} outlines the procedure we use to obtain chest drain annotations for the NIH dataset. We train a DenseNet121 model ($\mathcal{M}_{nih}$) for Pneumothorax detection on NIH images of size $128\times128$. 

\begin{figure*}[h!]
    \centering
    \centerline{\includegraphics[width=0.9\textwidth, trim = 2cm 10.5cm 5.5cm 3.5cm]{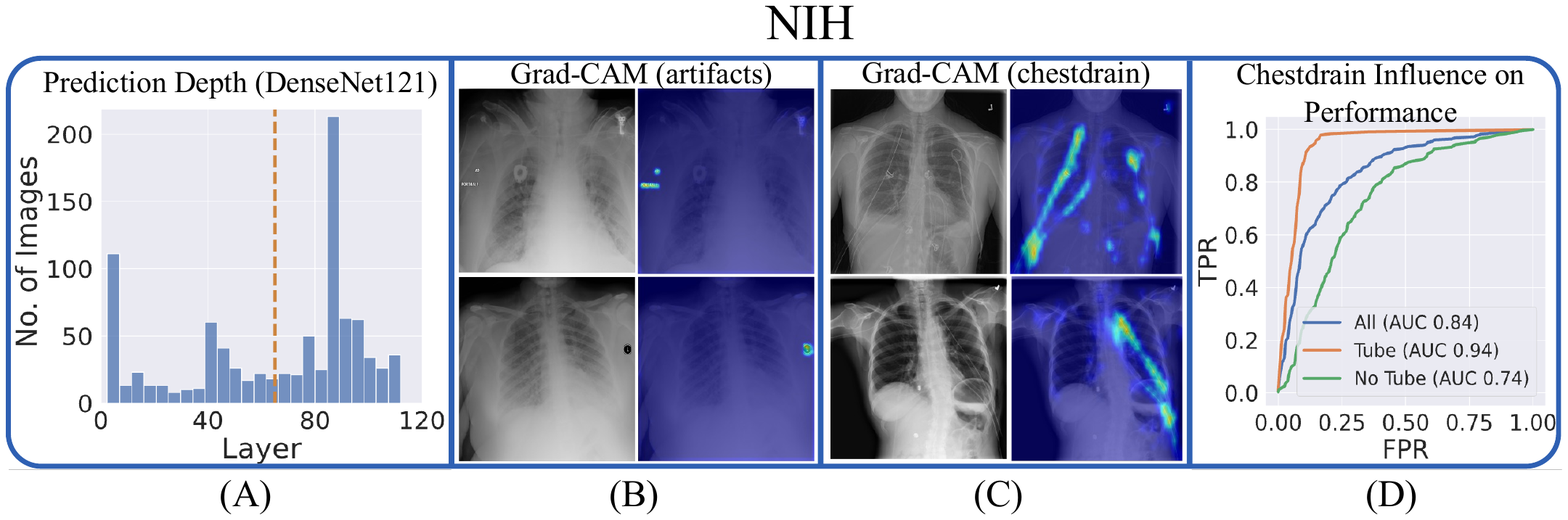}}
    \caption{Spurious correlations in NIH dataset. (A) PD plot for DenseNet-121 trained on NIH shows prominent peaks in the initial layers. (B, C) Grad-CAM reveals that the initial layers use irrelevant and spurious artifacts and chest drains for classification. (D) The chest drain spurious feature affects the AUC performance of the model. The X-axis (Y-axis) shows the false positive (true positive) rate.}
    \label{fig:nih_results}
\end{figure*}

Fig-\ref{fig:nih_results}A shows the PD plot for $\mathcal{M}_{nih}$. We observe that the distribution is not as skewed as the plots in the previous experiments. This is because all the images come from a single dataset (NIH). However, we do see suspicious peaks at the initial layers. Pneumothorax classification is challenging even for radiologists; hence, peaks at the initial layers raise suspicion. The Grad-CAM maps in Figs-\ref{fig:nih_results}B \& \ref{fig:nih_results}C reveal that the initial layers look at irrelevant artifacts and chest drains in the image. This provides evidence that the initial layers are looking at the harmful spurious features. Fig-\ref{fig:nih_results}D shows that the AUC performance is 0.94 when the diseased patients have a chest drain and 0.74 when they don't. In both cases, the set of healthy patients remains the same. This observation is consistent with the findings of \citet{luke_tube_in_lung} and indicates that the model looks at chest drains to classify positive Pneumothorax cases. 

The above experiments demonstrate how a peak located in the initial layers of the PD plot should raise suspicion, especially when the classification task is challenging. Visualization techniques like Grad-CAM can further aid our intuition and help identify harmful spurious features being learned by the model. This approach works well even for realistic scenarios and challenging spurious features (like chest drain for Pneumothorax classification), as shown above. Appendix-\ref{appdx:vision_expts} includes additional results on vision datasets.

\subsection{Detecting Harmful Spurious Features Early}
\label{expt:detect_shortcut_early}

Fig-\ref{fig:pd_evolution_nih} shows the evolution of the PD plot across epochs for $\mathcal{M}_{nih}$ (which is the model used in Fig-\ref{fig:nih_results}). This visualization helps us observe the training dynamics of the various layers. The red bar in the PD plots shows the samples with undefined prediction depths. 

\begin{figure*}[t!]
    \centering
    \centerline{\includegraphics[width=1.0\textwidth, trim = 0cm 0cm 0cm 0cm]{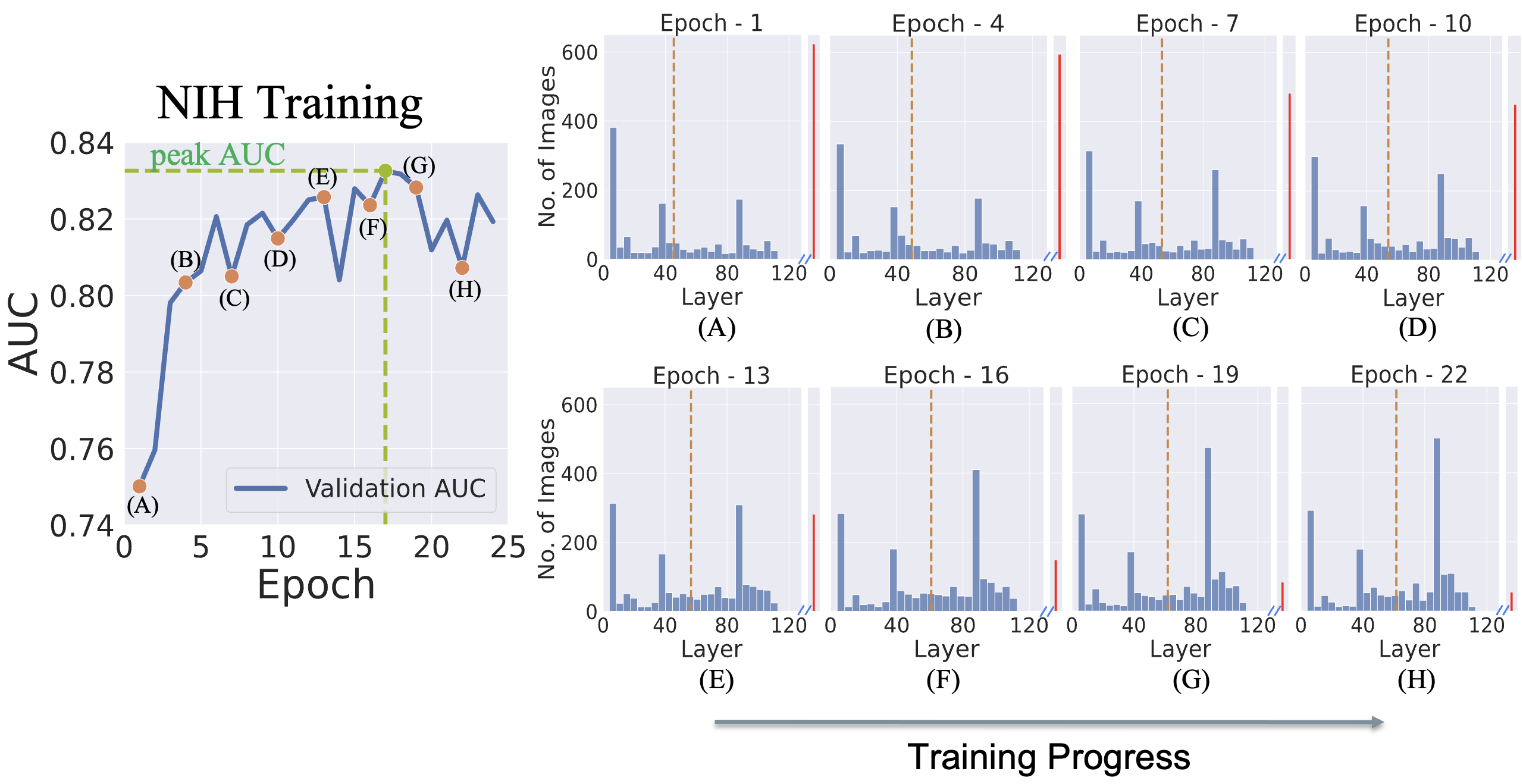}}
    \caption{Evolution of PD plot across epochs shows the training dynamics of the DNN on the NIH dataset. The initial peaks (layers-4\&40) are relatively stable throughout training, whereas the later peaks (layer-88) change with time. The initial layers learn spurious features, which can be detected early during the training. Samples with undefined PD (shown in red) take more time to converge and eventually accumulate in the later layers (layer 88 in this case).}
    \label{fig:pd_evolution_nih}
\end{figure*}

These plots reveal several useful insights into the learning dynamics of the model. Firstly, we see three prominent peaks in epoch-1 at layers-4,40,88 (see Fig-\ref{fig:pd_evolution_nih}A). The magnitude of the initial peaks (like layers-4\&40) remains nearly constant throughout the training. These peaks correspond to spurious features, as discussed in the previous section. This indicates that the harmful spurious features can often be identified early (epoch-1 in this case). Fig-\ref{fig:epoch_2_pd_plot} shows the PD plots at epoch-2 for other datasets with spurious features. It is clear from Fig-\ref{fig:epoch_2_pd_plot} that the suspiciously high peak at the initial layer is visible in the \textit{second epoch} itself. The Grad-CAM maps reveal that this layer looks at irrelevant artifacts in the dataset. This behavior is seen in all datasets shown in Fig-\ref{fig:epoch_2_pd_plot}. 

\begin{figure*}[h!]
    \centering
    \centerline{\includegraphics[width=1.0\textwidth, trim = 0.5cm 14.9cm 0.5cm 7.5cm]{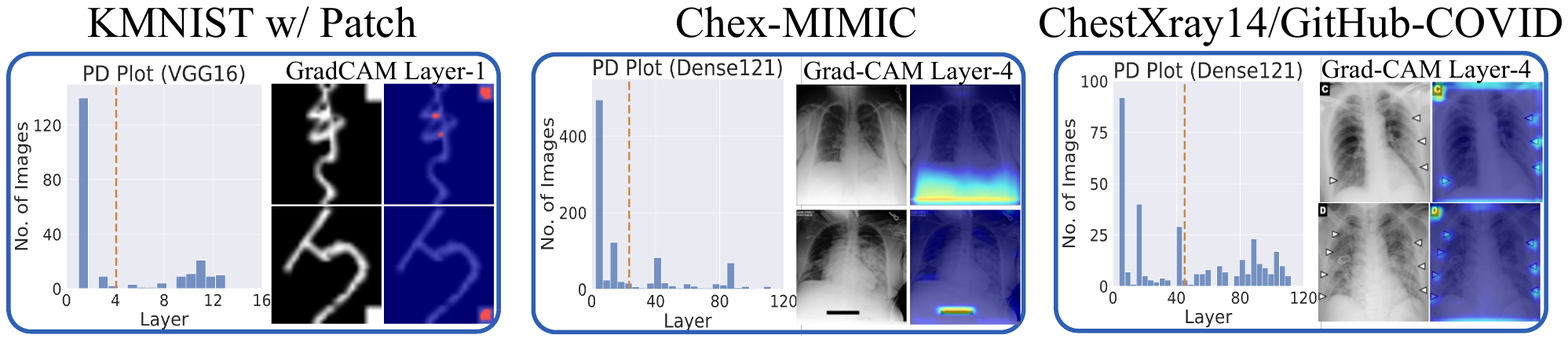}}
    \caption{Epoch-2 PD plots for various datasets with spurious features. The high spurious peak in the initial layer is visible in all the datasets indicating that the harmful spurious features can be detected early during the training.}
    \label{fig:epoch_2_pd_plot}
\end{figure*}

Secondly, we also see that accuracy and AUC plots are not sufficient to detect spurious features during training. We need to monitor the training dynamics using suitable metrics (like PD) to detect this behavior. Thirdly, the red peak (undefined samples) decreases in magnitude with time, and we see a proportional increase in the layer-88 peak. This corroborates well with the observation that later layers take more time to converge~\citep{rahaman2019spectral,mangalam2019deep,zeiler2014visualizing}. Therefore, samples with higher PD are initially undefined and do not appear in the PD plot. Nonetheless, samples with lower PD show up very early during the training, which helps us detect the harmful spurious features early. Early detection can consequently help develop intervention schemes that fix the problem early. 

\subsection{Prediction Depth $\approx \mathcal{V}\text{-Usable Information}$}
\label{expts_subsec:pd_pvi_rship}

Table-\ref{tab:pd_eq_pvi} measures the influence of spurious features on NIH and KMNIST using PD and PVI metrics. All diseased patients in the ``NIH w/ Spurious feat.'' dataset have a chest drain, whereas all diseased patients in the ``NIH w/o Spurious feat.'' dataset have no chest drain. The set of healthy patients is common for the two datasets. The KMNIST datasets are the same as those used in Section-\ref{subsec:main_expts_toy_medical}. We use VGG16 for KMNIST and DenseNet121 for NIH. Other training details are the same as Section-\ref{subsec:main_expts_toy_medical}. 

\begin{table}[!h]
\caption{Effect of Spurious features on Prediction Depth and the negative conditional $\mathcal{V}\text{-entropy}$ (-$\boldsymbol{H_{\cV_{cnn}}}(Y \mid X)$).
The label marginal distributions are the same with or without the spurious feature, and thus the negative conditional $\mathcal{V}\text{-entropy}$ is proportional to  $\mathcal{V}\text{-information}$.
}
\label{tab:pd_eq_pvi}
\begin{center}
\small
    \begin{tabular}{ccc}
    \hline
    \multicolumn{1}{c}{\bf Dataset}  &\multicolumn{1}{c}{\bf mean PD} &\multicolumn{1}{c}{\bf $-\boldsymbol{H_{\cV_{cnn}}}(Y \mid X)$} \Tstrut{} \Bstrut{}\\
    \hline 
    NIH w/ Spurious feat.     & 53.43 & -0.1171 \Tstrut{}\\
    NIH w/o Spurious feat.    & 75.33 & -0.2321\\
    KMNIST w/ Spurious feat.  & 1.06 & -0.0024 \\
    KMNIST w/o Spurious feat. & 5.25 & -0.0585 \Bstrut{}\\
    \hline
    \end{tabular}
\end{center}
\end{table}

Table-\ref{tab:pd_eq_pvi} shows that datasets with spurious features ($D_s$) have smaller mean PD values than their counterparts without such features ($D_i$). Proposition-1 (see Section-\ref{sec:background}, Appendix-\ref{subsec:proof_prop1}) shows that a sufficient gap between the mean PDs of $D_s$ and $D_i$ causes the $\mathcal{V}$-Information of $D_s$ to be greater than $D_i$. Table-\ref{tab:pd_eq_pvi} confirms this in a medical-imaging dataset with a real chest drain spurious feature, and we see that the mean ``usable information'' increases when there is a spurious feature. This implies that the model learns spurious features as they have more usable information than the core features. We investigate this relationship between PD and $\mathcal{V}$-information in more detail in Appendix-\ref{appdx:pd_pvi_rship}. \citet{swabha_pvi} also show that $\mathcal{V}$-information is positively correlated with test accuracy. This explains the significant change in AUC observed in Fig-\ref{fig:nih_results}D. Proposition 1 bridges the gap between the notions of PD and $\mathcal{V}$-usable information. This connection between $\mathcal{V}$-information and PD indicates that monitoring early training dynamics using PD not only helps detect the harmful spurious features but also bears insights into the dataset's difficulty (in information-theoretic terms) for a given model class.

\section{Conclusion}
    We study spurious features by monitoring the early training dynamics of deep neural networks (DNNs). We empirically show that not all spurious features hurt generalization, but only those that are easier than the core features do. So spurious features can be ``benign'' or ``harmful'' depending on whether they are harder or easier than core features for a given model. Our hypothesis is: \textit{``Monitoring the easy features learned by the initial layers of a DNN early during the training can help identify harmful spurious features.''} We validate this hypothesis on real medical and vision datasets. We show that spurious features are learned quite early during the training and one can detect them by monitoring the early training of DNNs using instance difficulty metrics like Prediction Depth (PD). Further, we show a theoretical connection between PD and $\mathcal{V}$-information to support our empirical results. Datasets with spurious features have more $\mathcal{V}$-information as compared to their counterparts without such features causing the model to learn the spurious feature. To conclude, relying only on accuracy plots is insufficient for detecting harmful spurious features, and we recommend monitoring the DNN training dynamics using instance difficulty metrics like PD for the early detection of such features.

\bibliography{main}
\bibliographystyle{tmlr}

\newpage
\appendix
\section{Appendix}
    \subsection{Proof of Proposition-1}
\label{subsec:proof_prop1}

\begin{proposition}
Given two datasets, $D_s$ with spurious features and $D_i$ without them, we assume the following:
\begin{enumerate}
    \item (Well-Trained Model Assumption) The part of the network from any representation to the label is one of the functions that compute $\cV$-information.
    \item (Function Class Complexity Assumption) Assume that there exists a $K \in \{1, N\}$ such that $V_{cnn}$ of depth $N-K$ is deep enough to be a strictly larger function class than $V_{knn}$ with a fixed neighbor size ($29$ in this paper).
    Assume that this $V_{knn}$ is a larger function class than a linear function.
    \item (Controlled Confidence Growth Assumption) For both datasets $D\in \{D_s, D_i\}$, assume that the for all $k\in \{1,\cdots, N\}$,
    \[
        \tau \leq \cI_{\cV_{knn}}^{D}(\phi_k) - \cI_{\cV_{knn}}^{D}(\phi_{k-1}) \leq \epsilon
    \]
    \item (Prediction Depth Separation Assumption) Let $L$ be an integer such that, $L\leq K$ and $L <  N - \psi \max_y \left(-\log p(Y=y)\right)$. Note that $p(Y=y)$ is simply the prevalence of class $y$. Let there exist a gap in prediction depths of samples in $D_s$ and $D_i$: $\psi \in (0, 0.5)$ such that $1-\psi$ fraction of $D_s$ has prediction depth $\leq L$ and $1-\psi$ fraction of $D_i$ has prediction depth $>K$.
\end{enumerate}

Then, for a model class of $N$-layer CNNs, we show that the $\cV_{cnn}$-information for $D_s$ is greater than $\cV_{cnn}$-information for $D_i$:
\[\cI_{\cV_{cnn}}^{D_s}(X \rightarrow Y) \geq  \cI_{\cV_{cnn}}^{D_i}(X \rightarrow Y)\]

\end{proposition}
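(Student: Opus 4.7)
The plan is to reduce the comparison of the two $\cV_{cnn}$-informations to a comparison of $\cV_{knn}$-informations at intermediate representations, where the prediction depth separation (Assumption~4) gives direct control. Concretely, I would bound $\cI_{\cV_{cnn}}^{D_s}(X\to Y)$ from below and $\cI_{\cV_{cnn}}^{D_i}(X\to Y)$ from above, both in terms of layer-wise $\cV_{knn}$-quantities and the constants $\psi,\tau,\epsilon$ that appear in the assumptions, and then check that the choice of $L$ (with $L\leq K$ and $L< N-\psi\max_y(-\log p(Y=y))$) forces the former to dominate the latter.

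For the lower bound on $D_s$, the key observation is that having prediction depth at most $L$ means the $\cV_{knn}$ classifier at the representation $\phi_L$ already gets the sample right and stays consistent with all deeper layers. Using a PVI-style pointwise decomposition, the correctly-classified $(1-\psi)$-fraction contributes essentially $(1-\psi)H(Y)$ to $\cI_{\cV_{knn}}^{D_s}(\phi_L)$, while the remaining $\psi$-fraction can subtract at most $\psi\max_y(-\log p(Y=y))$. By Assumption~2 the tail CNN from layer $L$, of depth $N-L\geq N-K$, is a strictly larger function class than $\cV_{knn}$ with the fixed neighbor size, and by Assumption~1 it realizes its $\cV$-information. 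Chaining gives $\cI_{\cV_{cnn}}^{D_s}(X\to Y)\geq \cI_{\cV_{knn}}^{D_s}(\phi_L)\geq (1-\psi)H(Y)-\psi\max_y(-\log p(Y=y))$.

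For the upper bound on $D_i$, the $(1-\psi)$-fraction with prediction depth greater than $K$ means the $\cV_{knn}$ classifier at $\phi_K$ has \emph{not} converged, so its pointwise contribution at layer $K$ is bounded by the same max-log-probability term. Applying Assumption~3 iteratively from $k=K$ up to $k=N$ yields $\cI_{\cV_{knn}}^{D_i}(\phi_N)\leq \cI_{\cV_{knn}}^{D_i}(\phi_K)+(N-K)\epsilon$, and together with Assumption~1 and Assumption~2 (read so that the end-to-end CNN information equals the $\cV$-information that a sufficiently deep tail-class extracts from its chosen representation) this promotes to a comparable upper bound on $\cI_{\cV_{cnn}}^{D_i}(X\to Y)$. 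Combining with the lower bound for $D_s$ and invoking the inequality $L< N-\psi\max_y(-\log p(Y=y))$ then closes the gap.

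The main obstacle is the upper bound on $\cI_{\cV_{cnn}}^{D_i}(X\to Y)$: since $\cV_{cnn}$ is nominally a larger function class than $\cV_{knn}$, CNN-information is a priori not upper-bounded by any KNN-information, and the controlled growth in Assumption~3 is stated only for $\cV_{knn}$. Bridging this gap requires reading the well-trained model assumption as an equality between the CNN-information and the $\cV$-information extracted from the representation the network actually relies on for classification, so that the layer-wise KNN bound transfers. Getting this bridge right, together with the delicate bookkeeping for the $\psi$-fraction of ``bad'' samples on each side and the exact way $L$ is chosen in Assumption~4, is where the technical effort will concentrate.
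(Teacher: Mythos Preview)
Your high-level architecture matches the paper's: lower-bound $\cI_{\cV_{cnn}}^{D_s}$, upper-bound $\cI_{\cV_{cnn}}^{D_i}$, and reduce both to layerwise $\cV_{knn}$-quantities via Assumptions~1--2, then invoke Assumptions~3--4. You also correctly identify the delicate step in the upper bound (passing from CNN-information to KNN-information at $\phi_N$ via the ``linear $\subset \cV_{knn}$'' half of Assumption~2). However, two points in your lower bound diverge from the paper in ways that matter.

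First, the claim that the $(1-\psi)$-fraction with prediction depth $\leq L$ ``contributes essentially $(1-\psi)H(Y)$'' is not supported by the definition of prediction depth. Having $\mathrm{PD}\leq L$ only means the $k$-NN prediction has converged and cleared the confidence threshold used to declare a valid PD; it does not mean the classifier is correct with probability near one. The paper formalizes this via a PD--PVI bridge: $\mathrm{PD}(X)=k$ implies $\mathrm{PVI}_{knn}(\phi_k(X))\geq\delta$ for the fixed threshold $\delta$ (and $<\delta$ at layer $k-1$). So the correct pointwise lower bound is $\delta$, not $-\log p(Y=y)$, and the aggregate contribution is $(1-\psi)\delta$, not $(1-\psi)H(Y)$.

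Second, by evaluating directly at layer $L$ you bypass the telescoping step the paper performs between layers $L$ and $K$. The paper first drops to layer $K$ (where Assumption~2 applies as stated), then telescopes $\cI_{\cV_{knn}}^{D_s}(\phi_K)=\cI_{\cV_{knn}}^{D_s}(\phi_L)+\sum_{k=L+1}^{K}\bigl(\cI_{\cV_{knn}}^{D_s}(\phi_k)-\cI_{\cV_{knn}}^{D_s}(\phi_{k-1})\bigr)\geq \cI_{\cV_{knn}}^{D_s}(\phi_L)+(K-L)\tau$, picking up the term $(K-L)\tau$. That term is the quantitative handle on the PD gap and is what you eventually compare against $(N-K)\epsilon+\psi\max_y(-\log p(Y=y))$ from the upper-bound side. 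In your version the bound at layer $L$ carries no dependence on $L$ or $K$ once evaluated, so the inequality $L<N-\psi\max_y(-\log p(Y=y))$ from Assumption~4 has nothing to hook into and cannot ``close the gap'' as you assert. Restoring the telescoping and replacing $H(Y)$ by $\delta$ brings your argument in line with the paper's.
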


\begin{proof}

Before proceeding to the proof, we attempt to justify and reason about the above assumptions. 

Assumption-1 states a property of trained neural networks in the context of usable information. Let $f(X)$ be a trained neural network. Consider splitting the network into the representation $\phi_k(X)$ at the $k^{th}$ layer and the rest of the network as a function applied to $\phi_k(X)$: i.e., $f(X) = f_k\circ \phi_k(X)$. Then we assume that $f_k(\cdot)$ is the function that achieves the $\mathcal{V}_{\text{cnn of size(n-k)}}$-information between $\phi_k(X)$ and $Y$ \citep{swabha_pvi,xu2020_usable_info}. The function that computes $\mathcal{V}$-information must achieve a minimum cross-entropy \citep{swabha_pvi}. So if we train $f(X)$ by minimizing the cross-entropy loss, $f_k(.)$ must converge to a function that achieves the $\mathcal{V}_{\text{cnn of size(n-k)}}$-information between $\phi_k(X)$ and $Y$. 

Assumption-2 implies that the CNN class in $\mathcal{V}_{cnn}$ is deep enough such that the network after the $K^{th}$ layer can approximate a $k$-NN classifier with 29 neighbors; ($K$ here is same as the $K$ in assumption-4). This is also a reasonable assumption~\citep{chaudhuri2014rates,guhring2020expressivity}. \citet{chaudhuri2014rates} lower bounds the error for $k$-NN classifiers for a fixed $k$, and \citet{guhring2020expressivity} shows the depth expressivity of CNN classifiers. Assumption-3 states that the difference in $\mathcal{V}_{knn}$-information between intermediate layers does not explode indefinitely and thus can be bounded by some positive quantities $\tau$ and $\epsilon$.

Assumption-4 is also easily satisfied. For example, if the smallest prevalence class in the dataset has a prevalence greater than $\frac{1}{1000}$, then assumption-4 boils down to saying $L < N - 0.5*\max_y \left(-\log p(Y=y)\right) = N - 3.45$, where $L$ is the low PD value caused by spurious features in $D_s$, and $N$ is the total number of layers in the CNN. All our datasets satisfy the class prevalence $> \frac{1}{1000}$ constraint. Even diseases like pneumothorax which are rare, have a class prevalence of at least $\frac{1}{30}$ in both NIH and MIMIC-CXR. And $L < N - 3.45$ is easily satisfied in all our experiments. For e.g., see Fig-5 where $80\%$  (or $1-\psi=0.8$) of the samples have $\text{PD}\leq16$. So $L=16, N=121 \text{(for Densenet-121)}$ easily satisfies $16<121-3.45$.

Now we elaborate on the proof of the proposition given the four assumptions. We proceed in two parts: first, we lower bound $\cV_{cnn}$-information for $D_s$, and then we upper bound $\cV_{cnn}$ for $D_i$.

Assumption 3 implies:
\[ (B-A)\tau \leq \sum_{k=A}^B  \tau \leq \cI_{\cV_{knn}}^{D}(\phi_k) - \cI_{\cV_{knn}}^{D}(\phi_{k-1}) \leq (B-A) \epsilon \]
Note: $(A,B)$ are just placeholders for the min and max indices over which the summation is defined. They are replaced by $(L+1,K)$ and $(K+1,N)$ below while trying to lower bound $\cI_{\cV_{cnn}}^{D_s}$ and upper bound $\cI_{\cV_{cnn}}^{D_i}$ respectively.

\paragraph{PD - PVI connection.} Note that by definition, when the prediction depth is $k$ for a sample $X$, then $PVI_{knn}(\phi_{k}(X)) \geq \delta$ but $PVI_{knn}(\phi_{k-1}(X)) < \delta$. This follows from how we compute PD (see Section-\ref{sec:background} in the main paper, and Appendix-\ref{appdx:undefined_PD}). 

\paragraph{Lower bounding \boldmath{$\cI_{\cV_{cnn}}^{D_s}$}}

\begin{align*}
    \cI_{\cV_{cnn}}^{D_s} & = \cI_{\cV_{cnn\text{ of depth } N-K}}^{D_s}(\phi_K)
    &  \{ \text{Assumption-1} \}
\\ 
    & \geq \cI_{\cV_{knn}}^{D_s}(\phi_K) 
  & \{ \text{Assumption-2} \}
\\
    & = \cI_{\cV_{knn}}^{D_s}(\phi_L) + \sum_{k = L+1}^K \cI_{\cV_{knn}}^{D}(\phi_k) - \cI_{\cV_{knn}}^{D}(\phi_{k-1})  & \qquad \{ \text{Telescoping Sum} \}
\\
    & \geq \cI_{\cV_{knn}}^{D_s}(\phi_L) + (K-L)\tau 
    &  \{ \text{Assumption-3}\}
\\
    & \geq \psi \min_{X,Y \in D_s, pd >= L}  PVI_{knn}(X \rightarrow Y) &
\\ 
& \quad +  (1-\psi)
    \min_{X,Y \in D_s, pd < L} PVI_{knn}(X \rightarrow Y) + (K-L)\tau 
    & 
 \{ \text{Prediction Depth Separation}\}
\\
    & \geq 0*\psi  + \delta*(1-\psi) + (K-L)\tau 
    &  \{ \text{Prediction Depth Separation}\}
\end{align*}

\paragraph{Upper bounding \boldmath{$\cI_{\cV_{cnn}}^{D_i}$}}

\begin{align*}
     &\cI_{\cV_{cnn}}^{D_i} \leq \cI_{\cV_{knn}}^{D_i}(\phi_N) & \qquad \{ \text{Assumption-2}\}
\\
    & = \cI_{\cV_{knn}}^{D}(\phi_{K}) + \sum_{k = K+1}^N \cI_{\cV_{knn}}^{D}(\phi_k) - \cI_{\cV_{knn}}^{D}(\phi_{k-1}) & \qquad \{ \text{Telescoping Sum}\}
\\
    & \leq \cI_{\cV_{knn}}^{D}(\phi_{K}) + (N-K)\epsilon 
    & \qquad \{ \text{Assumption-3}\}
\\
    & \leq (N-K)\epsilon +  \psi\max_{X,Y \in D_i, pd(X)\leq K}PVI_{\cV_{knn}}^{D}(\phi_{K}(X)\rightarrow Y)
    & 
    \\
        & \qquad \qquad + (1-\psi) \max_{X,Y \in D_i, pd(X)>K}PVI_{\cV_{knn}}^{D}(\phi_{K}(X)\rightarrow Y)
&   \{ \text{Prediction Depth Separation}\}
\\
    & \leq (N-K)\epsilon +  \psi \max_y \left(-\log p(Y=y)\right)
    & 
    \\
        & \qquad \qquad + (1-\psi) \max_{X,Y \in D_i, pd(X)>K}PVI_{\cV_{knn}}^{D}(\phi_{K}(X)\rightarrow Y)
&   \{ \text{PVI}\leq -\log p(Y=y)\}
\\
    & \leq (N-K)\epsilon +  \psi \max_y \left(-\log p(Y=y)\right)  + (1-\psi) \delta
&    \{ \text{PD-PVI connection for pd}>K\}
\end{align*}

The proof follows by comparing the lower bound on $\cI_{\cV_{cnn}}^{D_s}$
and the upper bound on $\cI_{\cV_{cnn}}^{D_i}$. Intuitively what this means is that when there is a sufficiently large gap in the mean PD between $D_s$ and $D_i$, then the $\mathcal{V}$-information of $D_s$ exceeds the $\mathcal{V}$-information of $D_i$, which is why the model prefers learning the spurious feature instead of using the core features for the task.


\end{proof}

\subsection{Grad-CAM Visualization} PD plots help us understand the model layers actively used for classifying different images. To further aid our intuition, we visualize the Grad-CAM outputs for the model's arbitrary layer $k$ by attaching a soft-KNN head. Let $g_{knn}$ denote the soft and differentiable version of k-NN. We compute $g_{knn}$ as follows: \\
\textcolor{white}{This is a sample text in} $ g_{knn}(\phi^{k}_{q}; \phi^k_{i \in \{1,2,...m\}}) = \frac{ \sum_{j \in \mathcal{N}(\phi^{k}_{q},1)} \exp^{- \Vert \phi^{k}_{q} - \phi^{k}_{j} \Vert /s}  }{ \sum_{j \in \mathcal{N}(\phi^{k}_{q},:)} \exp^{-\Vert \phi^{k}_{q} - \phi^{k}_{j} \Vert /s} }$ \\
This function makes the KNN differentiable and can be used to compute Grad-CAM~\citep{selvaraju2017grad}. We use the $\mathcal{L}_1$ norm for all distance computations. $\phi^{k}_{q}$ corresponds to feature at layer-$k$ for query image $x_q$. Let $\phi^k_{i \in \{1,2,...m\}}$ be the training data for KNN. Let $\mathcal{N}$ denote the neighborhood function. $\mathcal{N}(\phi^{k}_{q},:)$ returns the indices of K-nearest neighbors for $\phi^{k}_{q}$. $\mathcal{N}(\phi^{k}_{q},1)$ returns indices of images with positive label ($y=1$) from the set of K-nearest neighbors for $\phi^{k}_{q}$. $s$ is the median for the set of $\mathcal{L}_1$ norms $\{\Vert \phi^{k}_{q} - \phi^{k}_{j} \Vert \}$ for $j \in \mathcal{N}(\phi^{k}_{q},:)$.

\subsection{Vision Experiments}
\label{appdx:vision_expts}

We use the \textit{NICO++} (Non-I.I.D. Image dataset with Contexts) dataset~\citet{zhang2022nico++} to create multiple spurious datasets (Cow vs. Bird; Dog vs. Lizard) such that the context/background is spuriously correlated with the target. NICO++ is a Non-I.I.D image dataset that uses context to differentiate between the test and train distributions. This forms an ideal setup to investigate what spurious correlations the model learns during training. We follow the procedure outlined by~\cite{puli2021_nurd} to create datasets with spurious correlations (90\% prevalence) in the training data. The test data has the relationship between spurious attributes and the true labels flipped. This is similar to the Chex-MIMIC dataset illustrated in section-\ref{sec:semi_synth_expts}. We test our hypothesis using ResNet-18 and VGG16. We train our models for 30 epochs using an Adam optimizer and a base learning rate of 0.01. We choose the best checkpoint using early stopping.

\begin{figure}[!ht]
    \centering
    \includegraphics[width=1.0\textwidth, trim = 0cm 6cm 0cm 0cm]{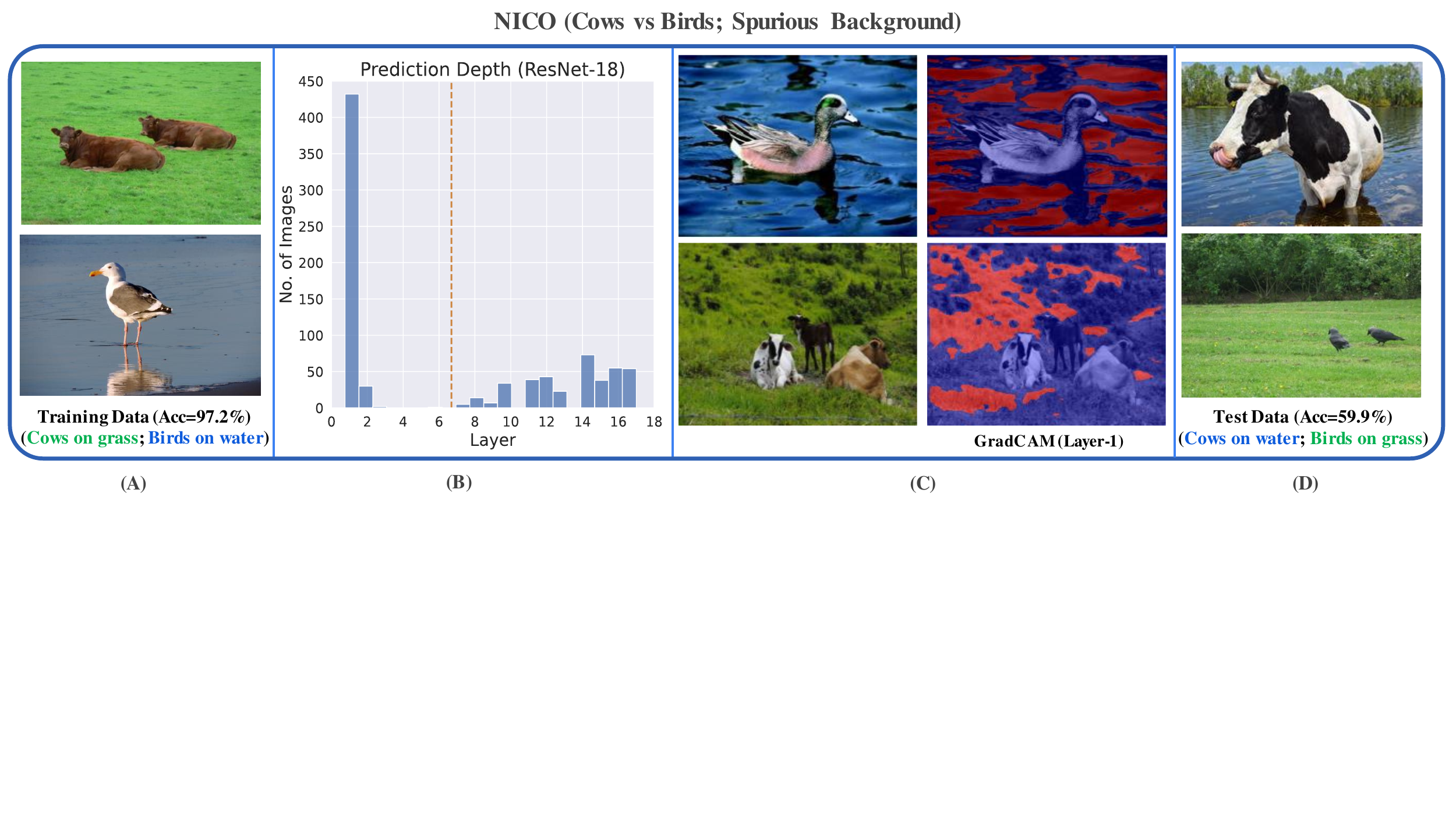}
    \caption{Cow vs. Birds classification on NICO++ dataset. (A) Training data contains images of cows on grass and birds on water (correlation strength=$0.9$). The model achieves 97.2\% training accuracy. (B) PD plot for ResNet-18 reveals a spurious peak at layer-1, indicating the model's heavy reliance on very simple (potentially spurious) features. (C) GradCAM plots for layer 1 reveal that the model mainly relies on the spurious background to make its predictions. (D) Consequently, the model achieves a test accuracy of only 59.9\% on test data where the spurious correlation is flipped (i.e., cows (birds) are found on water (grass)).}
    \label{fig:nico1}
\end{figure}

\begin{figure}[!ht]
    \centering
    \includegraphics[width=1.0\textwidth, trim = 0cm 4.7cm 0cm 0cm]{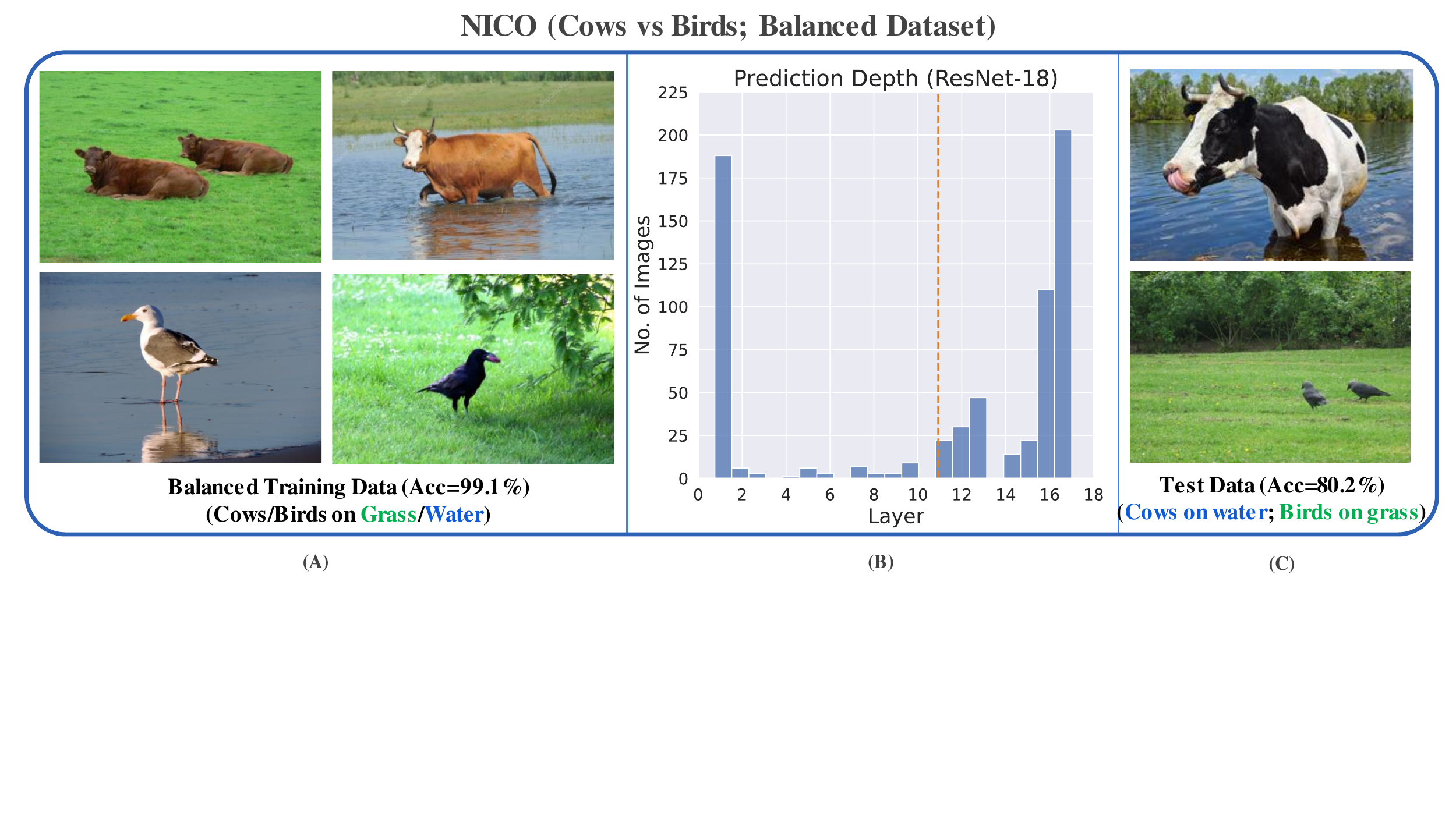}
    \caption{Balanced dataset for Cow vs. Birds classification task on NICO++ dataset. (A) The training dataset contains a balanced distribution of cows and birds found on water and grass (each group has an equal number of images). (B) The balanced dataset shifts the PD plot towards the later layers (compared to Fig-\ref{fig:nico1}B, indicating that the model relies lesser on spurious features. (C) This consequently results in an improved test accuracy of 80.2\% (as compared to 59.9\% in Fig-\ref{fig:nico1}D for the spurious dataset).}
    \label{fig:nico2}
\end{figure}

\begin{figure*}[!ht]
    \centering
    \includegraphics[width=1.0\textwidth, trim = 0cm 2.7cm 0cm 0cm]{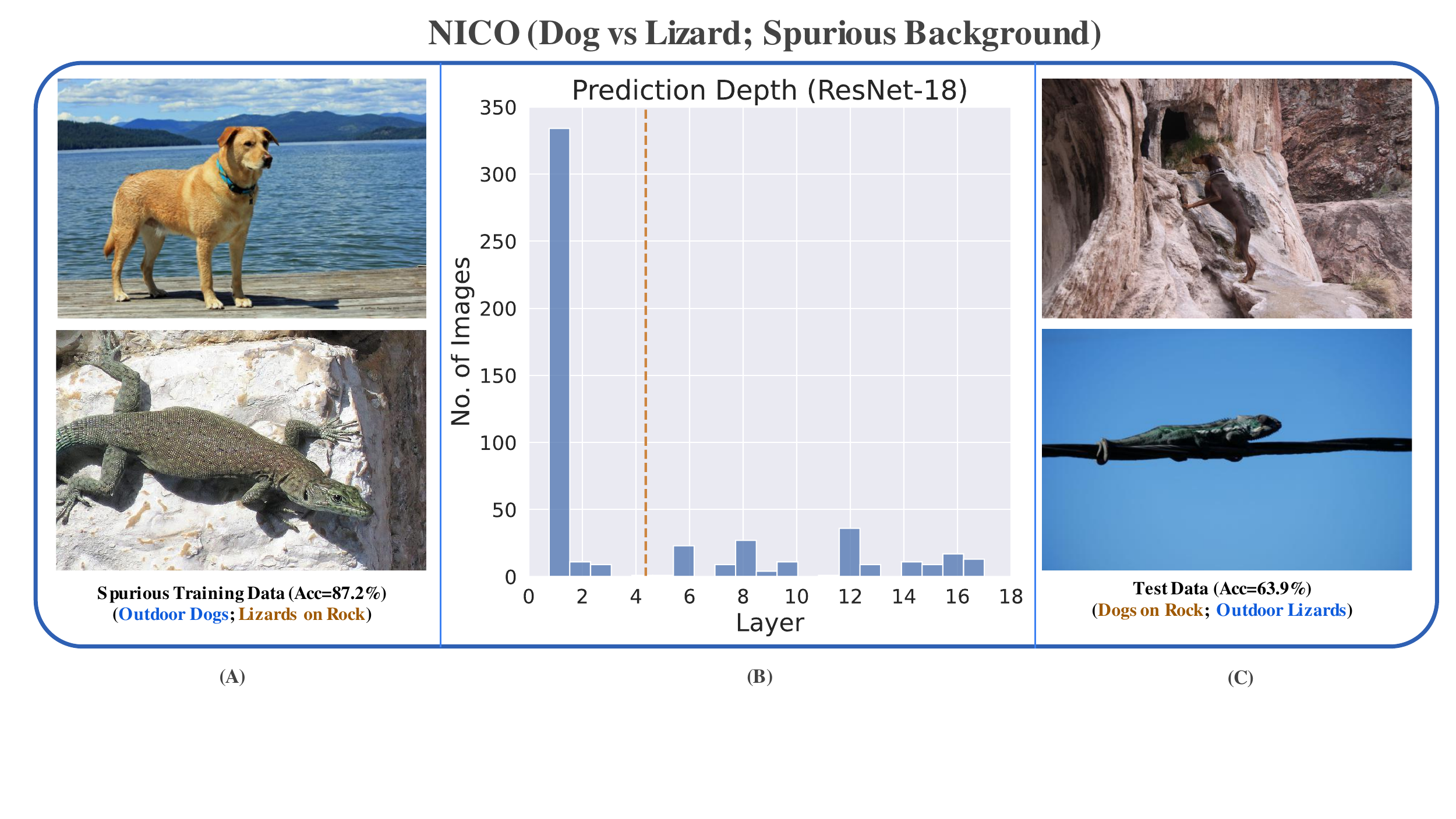}
    \caption{Dog vs. Lizard classification with a spurious background feature on NICO++ dataset. (A) Training data contains images of outdoor dogs and lizards on rock (correlation strength=$0.9$). The spurious background color/texture reveals the foreground object. The model achieves 87.2\% training accuracy. (B) PD plot for ResNet-18 reveals a spurious peak at layer-1, indicating the model's reliance on simple (potentially spurious) features. (C) The low test accuracy confirms this (63.9\%). The test data has the spurious correlation flipped (i.e., images contain dogs on rock and lizards found outdoors.)}
    \label{fig:nico3}
\end{figure*}

Figures-\ref{fig:nico1},\ref{fig:nico3} show PD plots and train/test accuracies for models that learn the spurious background feature present in the NICO++ dataset. While all models achieve $>85$\% training accuracy, they have poor accuracies (~50\%) on the test data where the spurious correlation is flipped. This can be seen simply by observing the PD plots for the model on the training data. The plots are skewed towards the initial layers indicating that the model relies heavily on very simple (potentially spurious) features for the task. GradCAM maps also confirm that the model often focuses on the background context rather than the foreground object of interest. 

We further observe in Fig-\ref{fig:nico2} that balancing the training data (to remove the spurious correlation) results in a model with improved test accuracy (80.2\%) as expected. This is also reflected in the PD plot (Fig-\ref{fig:nico2}B), where we see that the distribution of the peaks, as well as the mean PD, shift proportionately towards the later layers, indicating that the model now relies lesser on the spurious features. 

By monitoring PD plots during training and using suitable visualization techniques, we show that one can obtain useful insights about the spurious correlations that the model may be learning. This can also help the user make an educated guess about the generalization behavior of the model during deployment.

\subsection{Empirical Relationship: PD vs $\mathcal{V}$-information}
\label{appdx:pd_pvi_rship}

In Section-\ref{expts_subsec:pd_pvi_rship} we explore the relationship between PD and $\mathcal{V}$-information. To empirically confirm these results, we further investigate this relationship on four additional datasets: KMNIST, FMNIST, SVHN, and CIFAR10. We train a VGG16 model on these datasets for ten epochs using an Adam optimizer and a base learning rate of 0.01. We use a bar plot to show the correlation between PD and $\mathcal{V}$-entropy. We group PD into intervals of size four and compute the mean $\mathcal{V}$-entropy for samples lying in this PD interval. 

\begin{figure*}[!ht]
    \centering
    \includegraphics[width=1.0\textwidth, trim = 0cm 8cm 0cm 0cm]{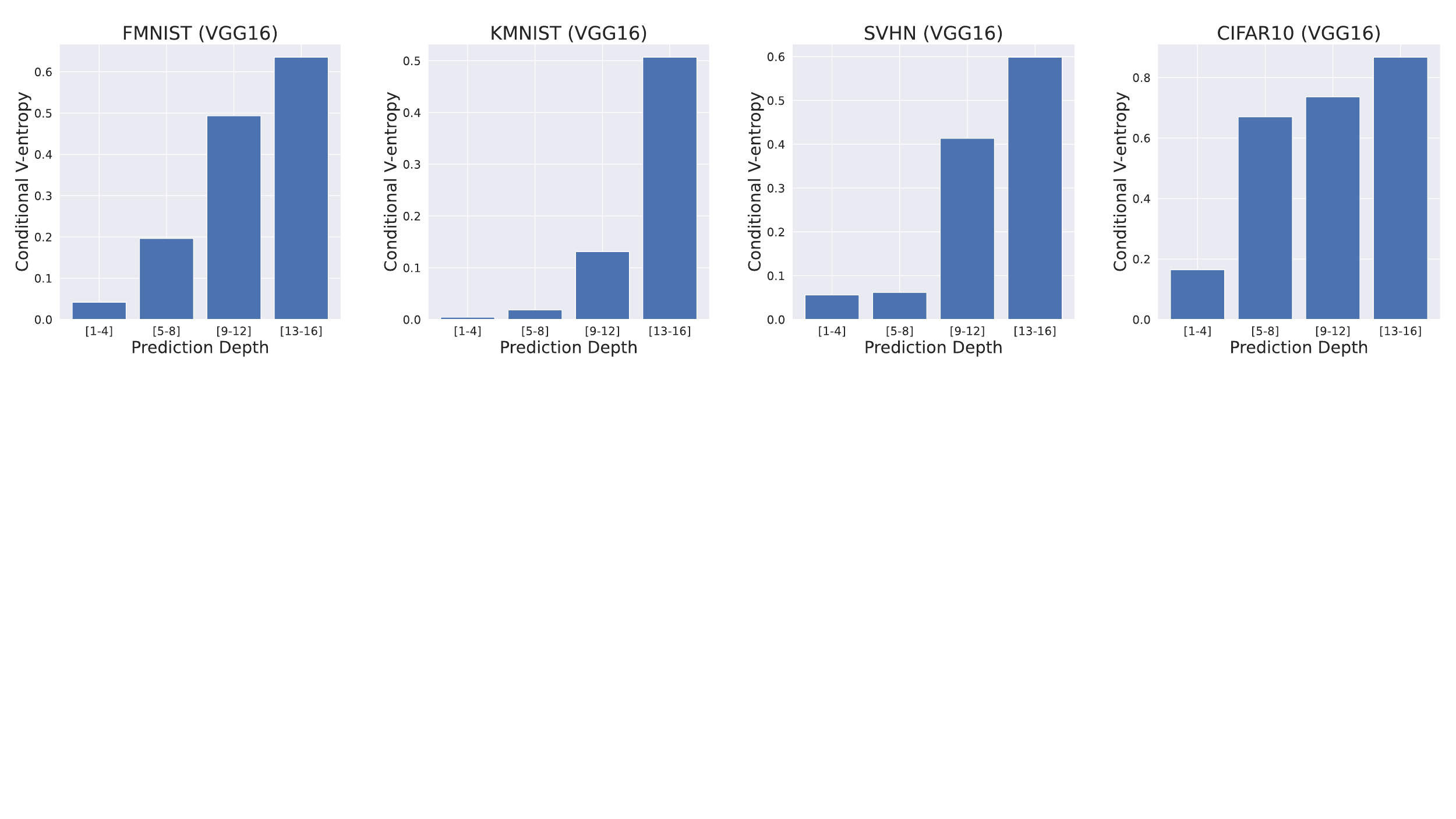}
    \caption{ The bar plots show a positive correlation between PD and Conditional $\mathcal{V}$-entropy. Samples with higher PD also have a higher $\mathcal{V}$-entropy resulting in lower usable information for models like VGG16.}
    \label{fig:pd_pvi_bar_plots}
\end{figure*}

In Section-\ref{expts_subsec:pd_pvi_rship}, we find that PD is positively correlated with $\mathcal{V}$-information, and the results shown in Fig-\ref{fig:pd_pvi_bar_plots} further confirm this observation. Instance difficulty increases with PD, and the usable information decreases with an increase in $\mathcal{V}$-entropy. It is, therefore, clear from Fig-\ref{fig:pd_pvi_bar_plots} that samples with a higher difficulty (PD value) have lower usable information, which is not only intuitive but also provides empirical support to Proposition-1 in Appendix-\ref{subsec:proof_prop1}. 

\subsection{Chest Drain Annotations for NIH Dataset}
\label{subsec:nih_chest_drains}
To reproduce the results by \citet{luke_tube_in_lung}, we need chest drain annotations for the NIH dataset~\citep{nih_dataset}, which is not natively provided. To do this, we use the MIMIC-CXR dataset~\citep{mimic_cxr_dataset}, which has rich meta-data information in radiology reports. We collaborate with radiologists to identify terms related to Pneumothorax from the MIMIC-CXR reports. These include pigtail catheters, pleural tubes, chest tubes, thoracostomy tubes, etc. We collect chest drain annotations for MIMIC-CXR by parsing the reports for these terms using the RadGraph NLP pipeline~\citep{jain2021radgraph}. Using these annotations, we train a DenseNet121 model to detect chest drains relevant to Pneumothorax. Finally, we run this trained model on the NIH dataset to obtain the needed chest drain annotations. We use these annotations to get the results shown in Fig - \ref{fig:nih_results}D, which closely reproduces the results obtained by \citet{luke_tube_in_lung}.

\subsection{Notion of Undefined Prediction Depth}
\label{appdx:undefined_PD}
Section-\ref{sec:background} shows how we compute PD in our experiments. While fully trained models give valid PD values, our application requires working with arbitrary deep-learning models that are not necessarily fully trained. We, therefore, introduce the notion of undefined PD by treating $k$-NN predictions close to 0.5 (for a binary classification setting) as invalid.  We define a $\delta$ such that $|f_{knn}(x) - 0.5| < \delta$ implies an invalid $k$-NN output. We use $\delta=0.1$ and $k=29$ in our experiments. If any $k$-NN predictions for the last three layers are invalid, we treat the PD of the input image to be undefined. To work with high-resolution images (like $512 \times 512$), we downsample the spatial resolution of all training embeddings to $8 \times 8$ before using the $k$-NN classifiers on the intermediate layers. We empirically see that our results are insensitive to $k$ in the range $[5,30]$.

\subsection{A PD Perspective for Feature Learning}
\label{appdx:pd_perspective_feature_learning}

Table 1 shows that the core-only accuracy stays high ($>$98\%) for datasets where the spurious feature is harder to learn than the core feature. When the spurious feature is easier than the core, the model learns to leverage them, and hence the core-only accuracy drops to nearly random chance ($\sim$50\%). Interestingly, the KMNpatch-MN results have a much higher core-only accuracy ($\sim$69\%) and a more significant standard deviation. This is because the choice of features that the model chooses to learn depends on the PD distributions of the core and spurious features. We provide three different perspectives on why KMNpatch-MN runs have better results.

\paragraph{PD Distribution Perspective:} The KMNpatch-MN domino dataset has a smaller difference in the core-spurious mean PDs ($2.2 - 1.1 = 1$), as compared to other datasets (for e.g., MN-KMN has a difference of $5-2.2=2.8$ in their mean PDs). The closer the PD distributions of the core and spurious features are, the more the model treats them equivalently. Therefore, in the case of the KMNpatch-MN, we empirically observe that different initializations (random seeds) lead to different choices the model makes in terms of core or spurious features. This is why the standard deviation of KMNpatch-MN is high (20.03) compared to the other experiments.

\paragraph{Theoretical Perspective (Proposition-1):} This is not surprising and, in fact, corroborates quite well with Proposition-1 in Appendix-\ref{subsec:proof_prop1}. The Prediction Depth Separation Assumption suggests that without a sufficient gap in the mean PDs of the core and spurious features, one cannot concretely assert anything about their ordinal relationship in terms of their usable information. In other words, spurious features will have higher usable information (for a given model) than the core features only if the spurious features have a sufficiently lower mean PD as compared to the core features. On the other hand, as the core and spurious features become comparable in terms of their difficulty, the model begins to treat them equivalently.

\paragraph{Loss Landscape Perspective:} \textit{(this is a conjecture; we do not have empirical evidence)} The loss landscape is a function of the model and the dataset. The solutions in the landscape that are reachable by the model depend on the optimizer and the training hyperparameters. Given a model and a set of training hyperparameters, we conjecture that the diversity (in terms of the features that the model learns during training) of the solutions in the landscape increase as the distance (difference in mean PD) between the core and spurious features decreases. This diversity manifests as the model's choice of using core vs. spurious features and could potentially result in a higher standard deviation of core-only accuracy across initializations.

 \begin{figure*}[t!]
    \centering
    \centerline{\includegraphics[width=1.0\textwidth, trim = 2.5cm 3.5cm 2cm 3.5cm]{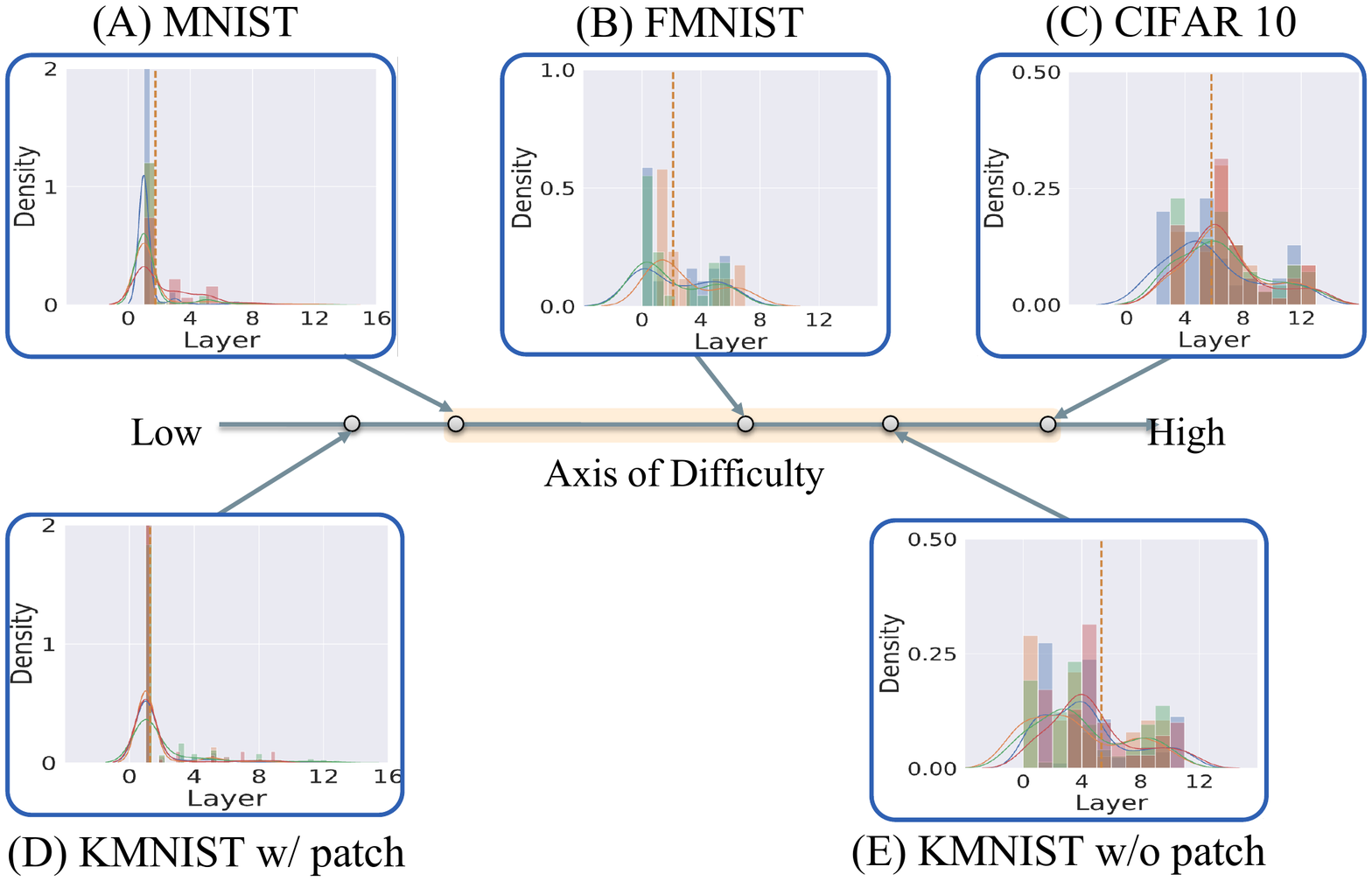}}
    \caption{Consistency of PD plots across four random runs. The overall ordering of dataset difficulty based on mean PD remains the same as in Fig-\ref{fig:kmnist_expts}}
    \label{fig:pd_envelopes}
\end{figure*}

\subsection{Code Reproducibility}

The code for this project is publicly available at: \url{https://github.com/batmanlab/TMLR23_Dynamics_of_Spurious_Features}

\subsection{PD Plot Consistency}

We perform several random runs for the experiments shown in Fig-\ref{fig:kmnist_expts} and compute the probability density function of the PD plots in Fig-\ref{fig:kmnist_expts} using kernel density estimation (KDE). Fig-\ref{fig:pd_envelopes} shows the resulting PD envelopes (computed using KDE), and also the original histograms of different random runs.

We can see the consistency of PD plots for any given dataset across runs involving different random seeds. The overall ordering of the datasets according to difficulty computed by mean PD remains the same. This shows that PD can be used as a reliable measure to estimate dataset difficulty and to also detect spurious feature. 

\subsection{Ensemble Uncertainty as a Measure of Instance Difficulty}
\label{sec:justifying_pd}

The main idea of our paper can be extended to other instance difficulty metrics and visualization techniques and is not limited to PD or grad-CAM (see Sec-\ref{sec:background}). We provide a proof of concept in this section by using techniques different from PD and grad-CAM to detect spurious features. However, there are several added benefits of using sophisticated metrics like PD and we highlight them here.

\subsubsection{Toy Dataset (Synthetic Spurious Feature)}\label{sec:entropy_toy_expts}

We re-run the experiments shown in Fig-\ref{fig:kmnist_expts} with a new set of metrics. In order to compute instance difficulty, we train an ensemble of linear models over the various datasets shown in Fig-\ref{fig:kmnist_expts} and compute the uncertainty over softmax outputs for various images. The uncertainty of the ensemble can be used as a proxy for instance difficulty. We compute the uncertainty by taking an expectation over softmax entropy (as common in the literature of uncertainty quantification~\citep{balaji_ensemble,gal_ddu,duq_gal}) of the ensemble models, and we use this metric instead of PD. We order the datasets based on mean entropy, and use SHAP~\citep{shap} for visualizing the early peaks in order to detect spurious features. 

 \begin{figure*}[t!]
    \centering
    \centerline{\includegraphics[width=1.0\textwidth, trim = 1cm 4.2cm 1cm 4cm]{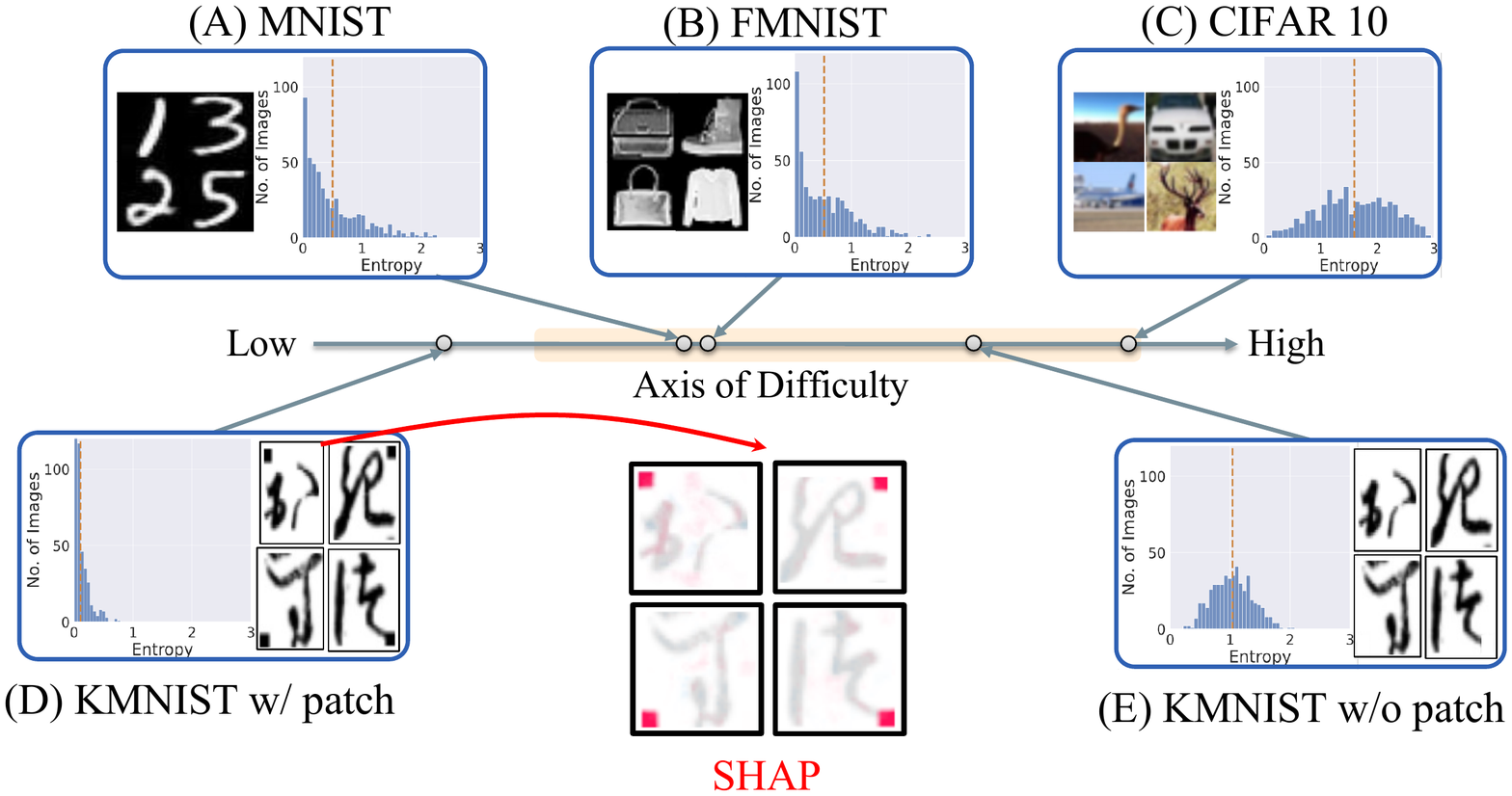}}
    \caption{Ensemble uncertainty as a measure of dataset difficulty and SHAP for visualizing spurious features. Top row shows three reference datasets and their corresponding entropy plots. The datasets are ordered based on their difficulty (measured using mean entropy shown by dotted vertical lines): KMN w/ patch(0.107) $<$ MNIST(0.491) $<$ FMNIST(0.527) $<$ KMN w/o patch(1.04) $<$ CIFAR10(1.593). The bottom row shows the effect of the spurious patch on the KMNIST dataset. The yellow region on the axis indicates the expected difficulty of classifying KMNIST. While the original KMNIST lies in the yellow region, the spurious patch significantly reduces the task difficulty. The SHAP plots show that the model focuses on the spurious patch.}
    \label{fig:toy_expts_entropy}
\end{figure*}

 \begin{figure*}[t!]
    \centering
    \centerline{\includegraphics[width=0.8\textwidth, trim = 2.5cm 3cm 2cm 3.5cm]{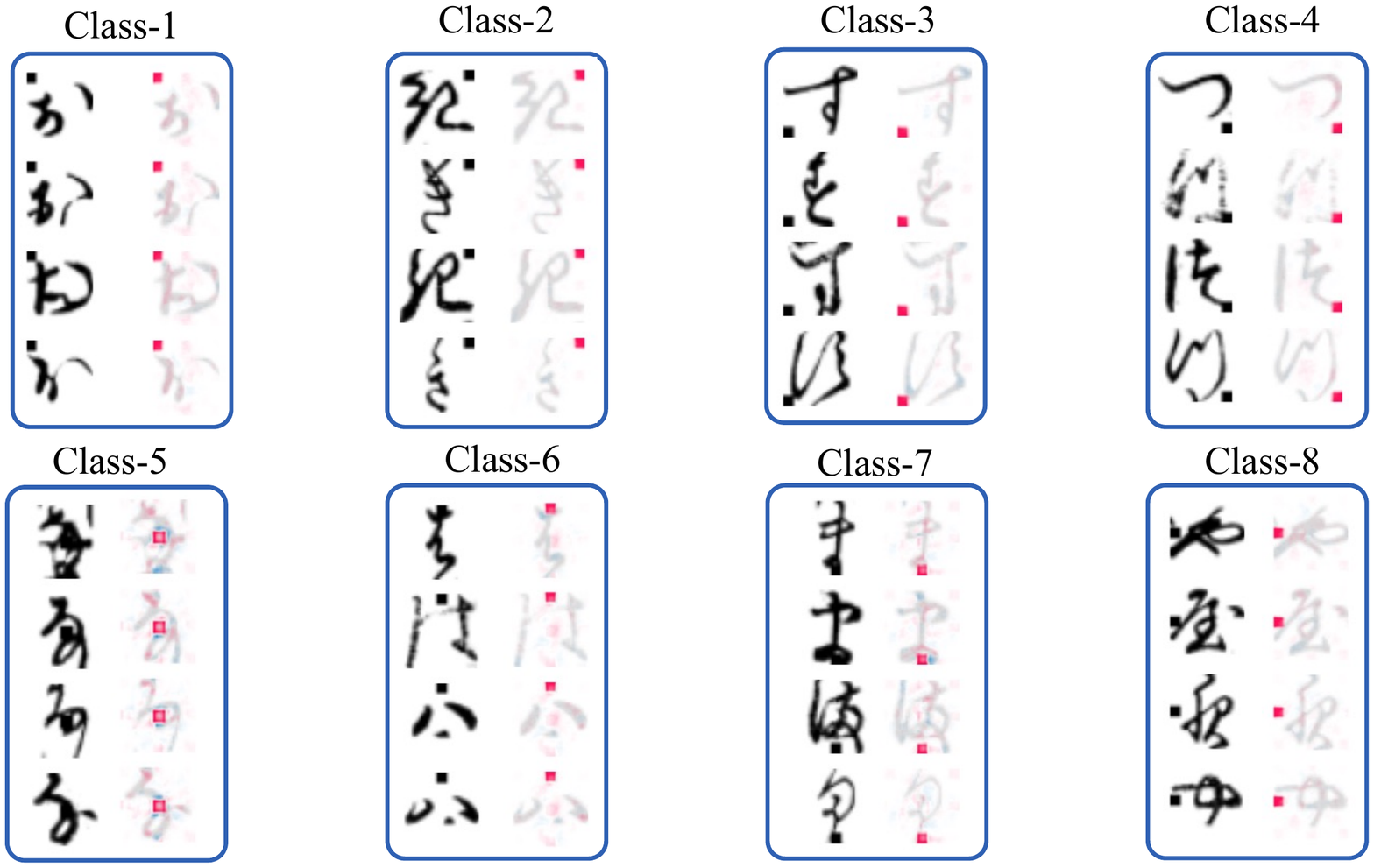}}
    \caption{SHAP plots for visualizing what linear models learn on the KMNIST dataset with a spurious patch. The figure shows how the model is significantly relies on the spurious patch for each of the classes shown.}
    \label{fig:shap_plots}
\end{figure*}

Results are shown in Fig-\ref{fig:toy_expts_entropy}. We find that the overall order of the dataset difficulty is the same as in Fig-\ref{fig:kmnist_expts}, and the entropy plots also look similar to the PD plots computed previously. The spurious patch in the KMNIST dataset significantly decreases the entropy of the ensemble models as expected. We visualize these images using SHAP to find that the model heavily relies on the spurious patch. Upon removing the patch from the KMNIST dataset (see Fig-\ref{fig:toy_expts_entropy}E), the mean entropy of the dataset significantly increases as the model now has to rely mainly on the core features. We plot the KMNIST images (with spurious patch) along with their SHAP visualizations for various classes in Fig-\ref{fig:shap_plots} to show how the model significantly relies on the spurious patch for all the classes. 

\subsubsection{Medical Dataset (Real Spurious Feature)}

We now compare the ensemble entropy method with PD on the real medical dataset (NIH) with real spurious features (like chest drain). The setup is exactly the same as in Sec-\ref{sec:real_nih_expts} (Real Spurious Feature in Medical Dataset). Additionally, we try to detect the simple artifacts and chest drains (as shown in Fig-\ref{fig:nih_results}) using the ensemble entropy method. We increase the model capacity of the ensemble by adding a convolutional layer and a ReLU layer before the linear classification (Conv--ReLU--Linear). This not only helps the ensemble models to detect more complex spurious features (as compared to simple 1-layer linear models as in the previous section-\ref{sec:entropy_toy_expts}) but also leads to smoother and better grad-CAM plots, which helps us better debug the spurious features the model is using. The setup for PD experiments, however, remains the same as in Sec-\ref{sec:real_nih_expts}.

 \begin{figure*}[t!]
    \centering
    \centerline{\includegraphics[width=0.8\textwidth, trim = 2.5cm 3cm 2cm 3cm]{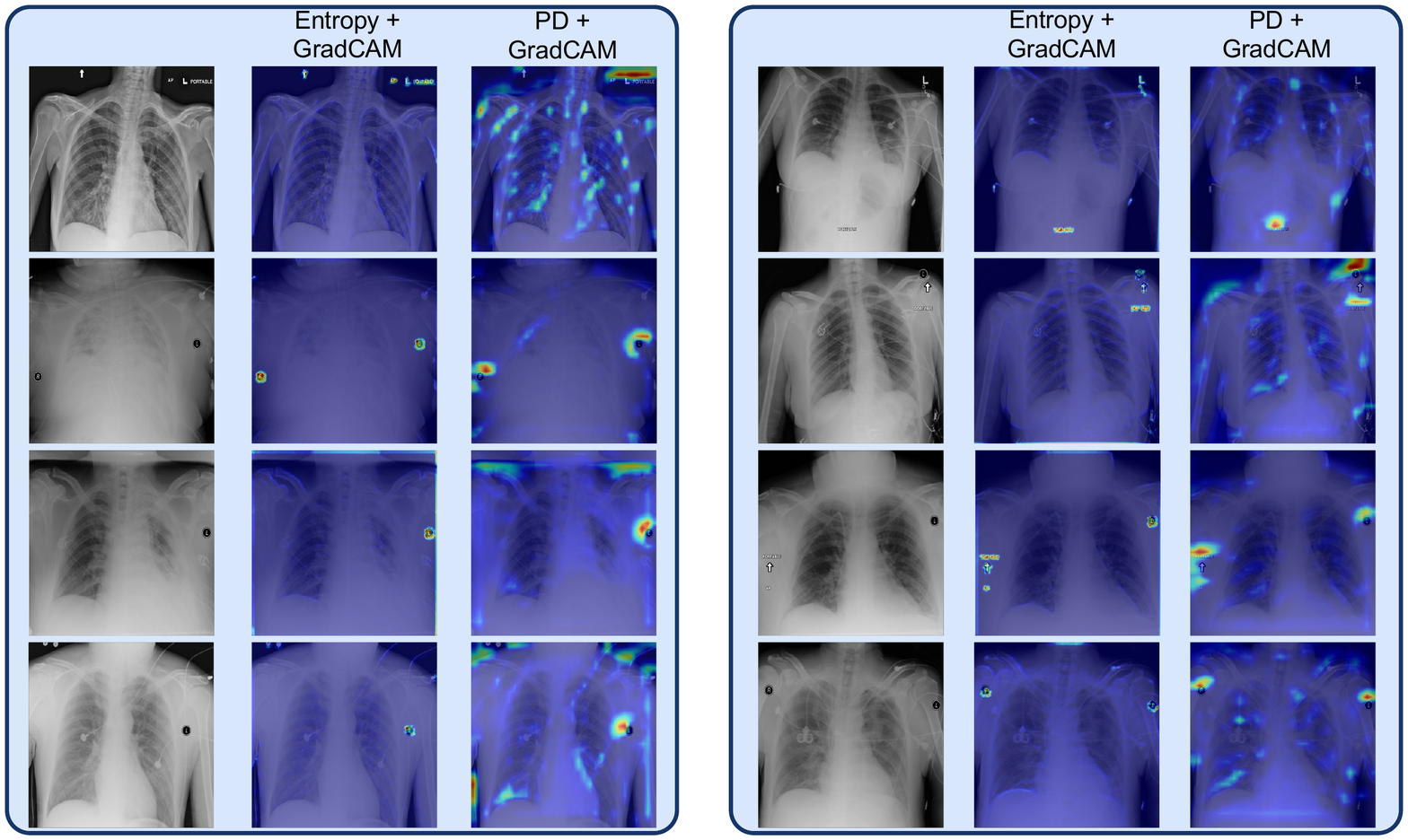}}
    \caption{Simple spurious artifacts in NIH dataset. GradCAM plots show that both methods, entropy and PD, can detect simple spurious artifacts in the NIH dataset.}
    \label{fig:pd_entr_artifacts}
\end{figure*}

 \begin{figure*}[t!]
    \centering
    \centerline{\includegraphics[width=1.0\textwidth, trim = 2.5cm 3cm 2cm 3cm]{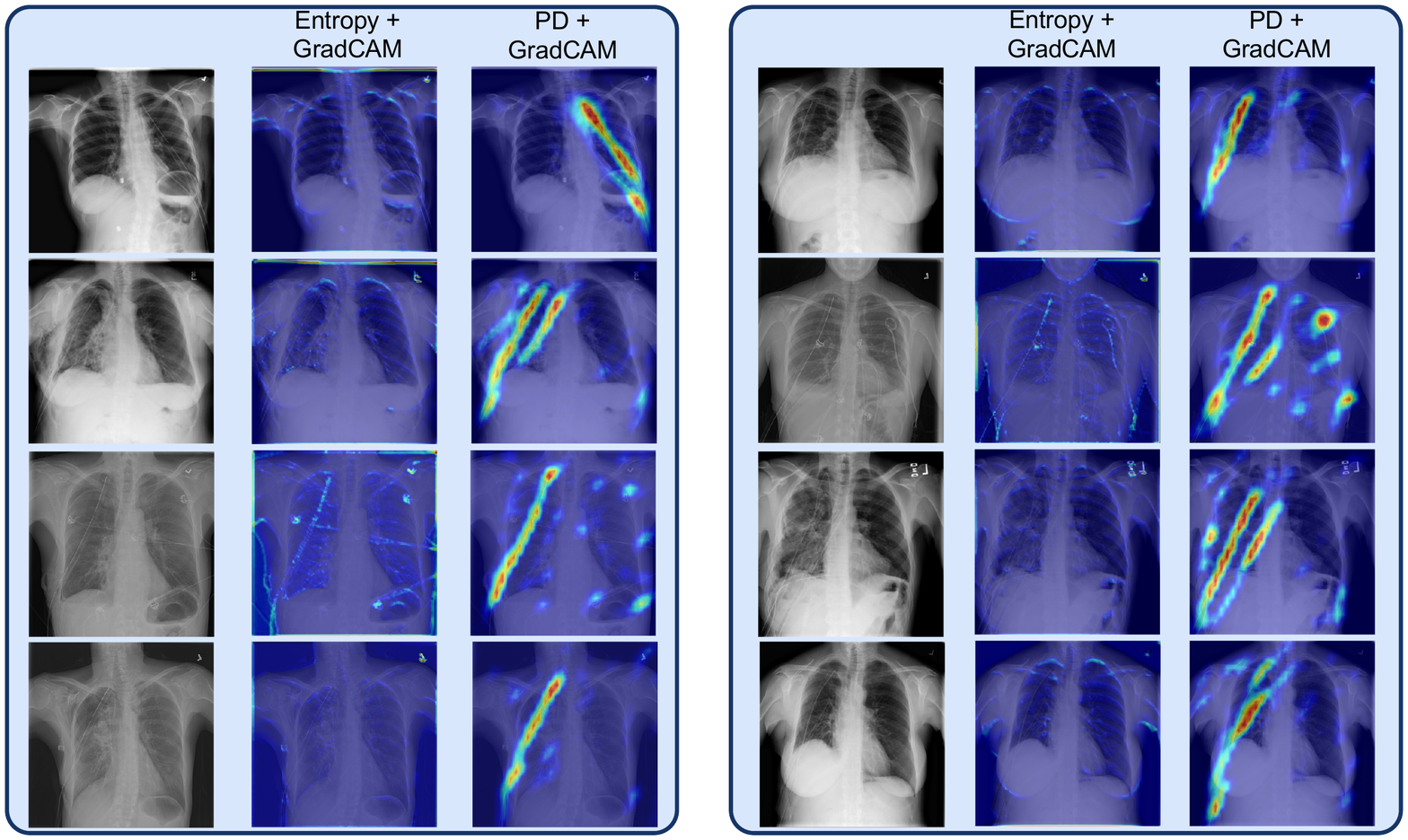}}
    \caption{Challenging spurious features (like chest drains) in NIH dataset. Grad-CAM plots for the PD method reveal the spurious chest drain in each of the NIH chest X-ray images. Whereas the entropy method, which comprises an ensemble of simple baseline convolutional architectures, cannot detect more complex spurious features like chest drains. This experiment justifies the use of PD metric over ensemble uncertainty of simple baseline models having low model capacity.}
    \label{fig:pd_entr_tubes}
\end{figure*}

Fig-\ref{fig:pd_entr_artifacts} shows that both the PD and the ensemble entropy method can detect the simple spurious features in the NIH dataset. However, Fig-\ref{fig:pd_entr_tubes} shows that the PD method can additionally detect more complex spurious features like chest drains in the NIH dataset, whereas the ensemble entropy method is not able to do so as it comprises simple convolutional neural networks that have low model-capacity and can therefore only detect simple spurious artifacts.

To further validate if the simple convolutional model used above can learn chest drains, we set up a simple classification experiment to try to classify if the X-ray image in the MIMIC-CXR dataset~\citep{mimic_cxr_dataset} has chest drains/tubes or not using this simple baseline model. We collect chest drain annotations for the MIMIC-CXR dataset~\citep{mimic_cxr_dataset} by parsing through radiology reports using the RadGraph NLP pipeline~\citep{jain2021radgraph}. We collaborate with radiologists to figure out terms related to Pneumothorax (like pigtail catheters, pleural tubes, chest tubes, thoracostomy tubes, etc.) Using these annotations, we train the simple convolutional model for 40 epochs, and the model only achieves an AUC of 0.58 (random guessing gives an AUC of 0.5). This experiment demonstrates that a simple convolutional model is not capable of detecting complex spurious features like chest drains. Therefore, the gradCAM plots for the ensemble entropy method fail to detect chest drains in the NIH dataset (as shown in Fig-\ref{fig:pd_entr_tubes}). 

This section shows that our work is not limited to PD or GradCAM. It is defined for any example difficulty metric (see Sec-3). We use PD as a proof of concept, but our hypothesis simply suggests that by monitoring the example difficulty of training data points and by visualizing the examples (particularly those with low difficulty), we can detect spurious features that hurt model generalization. The better the example difficulty metric and visualization technique, the better we can detect such spurious features. However, there are several additional benefits of using more sophisticated methods like PD, which are clearly illustrated in this section (see Fig-\ref{fig:pd_entr_tubes}). PD measures sample difficulty in a model-specific manner. It takes the model architecture into account, and we show why this is important in Fig-\ref{fig:deepsets_eg}. The ensemble of linear models approach shown here is not model-specific. It comprises simple linear baseline models and hence cannot detect more challenging spurious features like chest drains. Using sophisticated techniques like PD, one can also attempt to broadly identify the layers responsible for learning spurious features. This can further help develop suitable intervention schemes that remove the spurious feature representations from those layers. This is a promising future direction to address the issue of unlearning spurious features. One may use an ensemble of multi-layer neural networks to compute uncertainty over samples, but training an ensemble of larger models involves significant computational overhead and is time-consuming. Additionally, we develop useful theoretical connections between PD and information-theoretic concepts like usable information (see Appendix-\ref{subsec:proof_prop1}) to explain the empirical success we obtain in our experiments. The above benefits justify our use of the PD metric in this paper.

\end{document}